\theoremstyle{plain}
\newtheorem{theorem}{Theorem}[section]
\newtheorem{lemma}[theorem]{Lemma}
\theoremstyle{definition}
\newtheorem{definition}[theorem]{Definition}
\theoremstyle{remark}
\newtheorem{remark}[theorem]{Remark}
\newcommand{\lip}{\mathrm{Lip}}
\newcommand{\poly}{\mathrm{poly}}
\newcommand{\diam}{\mathrm{diam}}
\newcommand{\var}{\mathrm{Var}}
\newcommand{\Appendix}[1]{the full version for}
\renewcommand{\a}{\mathbf{a}}
\newcommand{\R}{\mathbb{R}}
\renewcommand{\comment}[1]{}
\newcommand{\cA}{\mathcal{A}}
\newcommand{\cC}{\mathcal{C}}
\newcommand{\cD}{\mathcal{D}}
\newcommand{\cF}{\mathcal{F}}
\newcommand{\cL}{\mathcal{L}}
\newcommand{\cU}{\mathcal{U}}
\newcommand{\cN}{\mathcal{N}}
\newcommand{\cO}{\mathcal{O}}
\newcommand{\cX}{\mathcal{X}}
\newcommand{\cW}{\mathcal{W}}
\newcommand{\bbE}{\mathbb{E}}
\newcommand{\bbR}{\mathbb{R}}
\DeclareMathOperator*{\argmin}{argmin}
\icmltitlerunning{A Law of Robustness beyond Isoperimetry}
\begin{document}

\twocolumn[
\icmltitle{A Law of Robustness beyond Isoperimetry}



\icmlsetsymbol{equal}{*}

\begin{icmlauthorlist}
\icmlauthor{Yihan Wu}{yyy}
\icmlauthor{Heng Huang}{yyy}
\icmlauthor{Hongyang Zhang}{comp}
\end{icmlauthorlist}

\icmlaffiliation{yyy}{Department of Computer Science, University of Maryland at College Park}
\icmlaffiliation{comp}{School of Computer Science, University of Waterloo}

\icmlcorrespondingauthor{Yihan Wu}{ywu42@umd.edu}
\icmlcorrespondingauthor{Heng Huang}{heng@umd.edu}
\icmlcorrespondingauthor{Hongyang Zhang}{hongyang.zhang@uwaterloo.ca}

\icmlkeywords{Machine Learning, ICML}

\vskip 0.3in
]



\printAffiliationsAndNotice{} 

\begin{abstract}
We study the \textit{robust interpolation problem} of arbitrary data distributions supported on a bounded space and propose a two-fold law of robustness. Robust interpolation refers to the problem of interpolating $n$ noisy training data points in $\R^d$ by a Lipschitz function. Although this problem has been well understood when the samples are drawn from an isoperimetry distribution, much remains unknown concerning its performance under generic or even the worst-case distributions. We prove a Lipschitzness lower bound $\Omega(\sqrt{n/p})$ of the interpolating neural network with $p$ parameters on arbitrary data distributions. With this result, we validate the law of robustness conjecture in prior work by Bubeck, Li, and Nagaraj on two-layer neural networks with polynomial weights. We then extend our result to arbitrary interpolating approximators and prove a Lipschitzness lower bound $\Omega(n^{1/d})$ for robust interpolation. Our results demonstrate a two-fold law of robustness: i) we show the potential benefit of overparametrization for smooth data interpolation when $n=\poly(d)$, and ii) we disprove the potential existence of an $\cO(1)$-Lipschitz robust interpolating function when $n=\exp(\omega(d))$. 

\end{abstract}
\vspace{-0.2cm}
\section{Introduction}
\vspace{-0.1cm}

Robustness has been a central research topic in machine learning~\cite{szegedy2013intriguing,goodfellow2014explaining}, statistics~\cite{huber2004robust}, operation research~\cite{ben2009robust}, and many other domains. In machine learning, study of adversarial robustness has led to significant advances in defending against adversarial attacks, where test inputs with slight modification can lead to problematic prediction results. In statistics and operation research, robustness is a desirable property for optimization problems against uncertainty, which can be represented as deterministic or random variability in the value of optimization parameters. This is known as robust statistics or robust optimization. In both cases, the problem can be stated as given a deterministic labeling function $g:\R^d\rightarrow [-1,1]$, (approximately) interpolating the training data $\{(x_i,g(x_i))\}_{i=1}^n$ or its noisy counterpart by a function with small Lipschitz constant. The focus of this paper is on the latter setting known as \emph{robust interpolation problem}~\cite{bubeck2021universal}. That is, given noisy training data $\{(x_i,g(x_i)+z_i)\}_{i=1}^n$ of size $n$ where $x_1,\cdots,x_n$ are restricted in a unit ball and $z_1,\cdots,z_n$ have variance $>0$, how many network parameters and training samples are needed for robust interpolation provided that the functions in the class can (approximately) interpolate the noisy training data with Lipschitz constant $L$?

There are several reasons to study the noisy setting~\cite{bubeck2021universal}: 1) The real-world data are  noisy. For example, it has been shown that around 3.3\% of the data in the most-cited datasets was inaccurate or mislabeled~\cite{northcutt2021pervasive}. 2) This noise assumption is necessary from a theoretical point of view, as otherwise there could exist a Lipschitz function which perfectly fits the training data for any large $n$. Despite progress on the robust interpolation problem~\cite{bubeck2021universal,pmlr-v134-bubeck21a}, many fundamental questions remain unresolved. In modern learning theory, it was commonly believed that 1) big data~\cite{schmidt2018adversarially}, 2) low dimensionality of input~\cite{blum2020random,yang2020randomized,kumar2020curse}, and 3) overparametrization~\cite{bubeck2021universal,pmlr-v134-bubeck21a} improve robustness. We view the robustness problem from the perspective of Lipschitzness and ask the following question:
\begin{center}
\emph{Are big data and large models a remedy for robustness?}
\end{center}
In fact, there is significant empirical evidence to indicate that enlarging the model size (overparametrization) improves robustness when $n$ is moderately large (\emph{e.g.}, when $n=\poly(d)$, see \cite{madry2017towards,schmidt2018adversarially}). Our work verifies the benefit of overparametrization for fitting a neural network with $p$ parameters below the noise level by proving such neural networks must have a Lipschitzness lower bound $\Omega(\sqrt{n/p})$. 
On the other hand, big data and large models may not be a remedy for robustness if $n$ goes even larger. We show that for any approximator, no matter how many parameters it contains, its Lipschitzness is of order $\Omega(n^{1/d})$. In particular, our result disproves the existence of learning an $\cO(1)$-Lipschitz function with $n=\exp(\omega(d))$. 
Besides, by showing that for any learning algorithm, there exists a joint data distribution such that one needs at least $n=\exp(\Omega(d))$ samples to learn an $\cO(1)$-Lipschitz function with good population error, we demonstrate that big data are also necessary for robust interpolation in some special cases.

The robust interpolation problem becomes more challenging when no assumptions are made on the distribution of covariates. Due to the well-separated nature of data, most positive results for obtaining good Lipschitzness lower bound have focused on the isoperimetry distribution~\cite{bubeck2021universal}. A probability measure $\mu$ on $\bbR^d$ satisfies $c$-isoperimetry if for any bounded $L$-Lipschitz $f:\bbR^d \to\bbR$, and any $t\geq0$, 
\vspace*{-0.2cm}
$$\Pr(|f(x)-\bbE[f(x)]|\geq t)\leq 2\exp(-\frac{dt^2}{2cL^2}).$$ 
Isoperimetry states that the output of any Lipschitz function is $\cO(1)$-subgaussian under suitable rescaling. Special cases of isoperimetry include high-dimensional Gaussians $\cN(0,\frac{I_d}{d})$, uniform distributions on spheres and hypercubes of diameter 1. However, real-world data might not follow the isoperimetry assumption. 
In contrast, our results of Theorem \ref{thm:result1} go beyond isoperimetry and providing a lower bound of robustness for functions with $p$ parameters under \emph{arbitrary} distributions in the bounded space. Our results of Theorem \ref{theorem: main result} go even further by providing a universal lower bound of robustness for \emph{any} model class, including the class of neural networks with arbitrary architecture.

\textbf{Notations.} We will use $\cX$ to represent the instance space, $\mathcal{F}=\{f: \mathcal{X}\to[-1,1]\}$ to represent the hypothesis/function space, $x\in\cX$ to represent the sample instance, $y\in[-1,1]$ to represent the target, and $z$ to represent the target noise. For errors, denote by $l(f(x),y)$ the loss function of $f$ on instance $x$ and target $y$, in our work we use the mean squared error as in \citet{bubeck2021universal}, i.e., $l(f(x),y)=(f(x)-y)^2$. Let $\cL_{\cD}(f) := \mathbb{E}_{(x,y)\sim {\cD}}[l(f(x),y)]$ be the population error, and let $\cL_S(f) := \frac{1}{|S|}\sum_{(x,y)\in S}[l(f(x),y)]$ be the empirical error. Denote by $f:\cX\rightarrow [-1,1]$ the \emph{prediction function} which maps an instance to its predicted target. It can be parameterized, e.g., by deep neural networks. For norms, we denote by $\|x\|$ a generic norm. Examples of norms include $\|x\|_\infty$, the infinity norm, and $\|x\|_2$, the $\ell_2$ norm. We will frequently use $(\cX,\|\cdot\|)$ to represent the normed linear space of $\cX$ with norm $\|\cdot\|$. Define $\diam(\cX)$ as the diameter of $\cX$ w.r.t. the norm $\|\cdot\|$. For a given score function $f$, we denote by $\lip_{\|\cdot\|}(f)$ (or sometimes $\lip(f)$ for simplicity) the Lipschitz constant of $f$ w.r.t. the norm $\|\cdot\|$. Let $\lceil\cdot\rceil$ represent the ceiling operator. We will use $\cO(\cdot)$, $\Theta(\cdot)$ $o(\cdot)$, and $\Omega(\cdot)$ to express sample complexity and Lipschitzness.

\subsection{Our results}

Our law of robustness is two-fold: 
a) overparametrization can potentially help robust interpolation when $n=\poly(d)$ (Section \ref{sec:beyondiso}), and b) there exists no robust interpolation when $n=\exp(\omega(d))$ (Section \ref{section: robustness suffers from too many data}).

Lipschitzness (or local Lipschitzness) is an important characterization of adversarial robustness for learning algorithms~\cite{yang2020closer,zhang2019theoretically,wu2022adversarial,wu2022towards}. The popular randomized smoothing approaches \cite{cohen2019certified,li2019certified,wu2022retrievalguard} can provide robust guarantee through Lipschitzness but suffer curse of dimensionality problem \cite{wu2021completing}. Thus, studying the Lipschitzness is crucial for understanding robustness. For a given score function $f$, we denote by $\lip_{\|\cdot\|}(f)$ the Lipschitz constant of $f$ \emph{w.r.t.} the norm $\|\cdot\|$. That is, for any $x_1,x_2$ in the input space, $|f(x_1)-f(x_2)|\le \lip_{\|\cdot\|}(f)\|x_1-x_2\|$.
Our results show lower bounds on the Lipschitzness of learned functions when the training error is slightly smaller than the noise level (\emph{i.e.}, in the case of overfitting), but without assumptions on the distribution of covariates except that they are restricted in the bounded space $\cX:=\{x:\|x\|\le 1\}$. We are interested in the assumption of bounded space because: 1) most applications of machine learning focus on the case where the data are in the bounded space. For example, images and videos are considered to be in $[-1,1]^d$. 2) The discussion of Lipschitzness is closely related to how large the input space is. For example, for the images restricted in $[-1,1]^d$, special attentions are paid on the $\ell_\infty$ robust radius of 0.031 or 0.062~\cite{zhang2019theoretically,madry2017towards}, which corresponds to a (local) Lipschitz constant of $\cO(1)$ for the classifier.

\noindent{\textbf{Overparametrization may benefit robust interpolation.}}
The universal law of robustness by \citet{bubeck2021universal} provides an $\Omega(\sqrt{nd/p})$ Lipschitzness lower bound of the interpolating functions when the underlying distribution is isoperimetry (see Theorem \ref{theorem: bubeck's result}). Our first result goes beyond the isoperimetry assumption, and provides an $\Omega(\sqrt{n/p})$ Lipschitzness lower bound of the interpolating functions under arbitrary distribution. We note that the $\sqrt{d}$ difference between the two Lipschitzness lower bounds is due to the special property of the isoperimetry assumption (see Remark~\ref{rmk:difference}). Our result predicts the \emph{potential} existence of an $\cO(1)$-Lipschitz function that fits the data below the noise level when $p=\Omega(n)$.
The following informal theorem illustrates the results (the detailed theorems are introduced at later sections):

\noindent{\textbf{Theorem A (informal version of Theorem \ref{thm:result1}).} }
\emph{Let $\cF$ be any class of functions from $\R^d\rightarrow [-1,1]$ and let $\{(x_i,y_i)\}_{i=1}^n$ be i.i.d. input-output pairs in $\{x:\|x\|\le 1\}\times [-1,1]$ for any given norm $\|\cdot\|$. Assume that:
\begin{itemize}
\vspace{-0.25cm}
\item[1.]
The expected conditional variance of the output (i.e., the ``noise level'') is strictly positive, denoted by $\sigma^2:=\bbE[\var[y|x]]>0$.
\vspace{-0.25cm}
\item[2.]
$\cF$ admits a $J$-Lipschitz parametrization by $p$ real parameters, each of size at most $poly(n,d)$.
\end{itemize}
\vspace{-0.25cm}
Then, with high probability over the sampling of the data, one has simultaneously for all $f\in\cF$:
\begin{equation*}
\frac{1}{n}\hspace{-0.1cm}\sum_{i=1}^n(y_i-f(x_i))^2\hspace{-0.1cm}\le\hspace{-0.1cm} \sigma^2-\epsilon \Rightarrow \lip_{\|\cdot\|}(f)\hspace{-0.1cm}\ge\hspace{-0.1cm} \Omega\hspace{-0.1cm}\left(\hspace{-0.1cm}\epsilon \sqrt{\frac{n}{p}}\right).
\end{equation*}}
\begin{remark}
Our theorem takes a further step in proving the Conjecture 1 in \citet{pmlr-v134-bubeck21a}, where it is conjectured that for generic data sets, with high probability, any $f$ in the collections of two layer networks with $p$ parameters fitting the data must also satisfy $\lip_{\|\cdot\|}(f)\ge\Omega(\sqrt{n/p})$. We validate the conjecture under the polynomial weights assumption, where
\citet{bubeck2021universal} validate the Conjecture 1 under the polynomial weights assumption and the isoperimetry assumption.
\end{remark}

\begin{remark}[Strong overparametrization is not necessary for the robust interpolation]
The Lipschitzness lower bound of \citet{bubeck2021universal} suggests strong overparametrization, i.e., $p=\Omega(nd)$, is required for the robust interpolation under the isoperimetry assumption. Our theorem shows that strong overparametrization may not be a necessary condition for the robust interpolation on a general distribution. Moderate overparametrization with $p=\Omega(n)$ may also be enough for robust interpolation. Our results are consistent with the empirical observations that CIFAR10 ($50000$ images) can be robustly fitted by a model with $p=10^6$, and ImageNet ($10^7$ images) can be robustly fitted by a model with $p=10^7\sim10^8$.
\end{remark}

\noindent{\textbf{Big data hurts robust interpolation.}}
Under the assumptions of isoperimetry distribution and the $J$-Lipschitz parameterized functions, the universal law of robustness by \citet{bubeck2021universal} predicts the \emph{potential} existence of an $\cO(1)$-Lipschitz function fits the data below the noise level when $p=\Omega(nd)$. Our result goes beyond the two assumptions and disproves the existence of such $\cO(1)$-Lipschitz functions in the big data scenario when $n=\exp(\omega(d))$ for \emph{arbitrary} distributions:

\noindent{\textbf{Theorem B (informal version of Theorem~\ref{theorem: main result}).}}
\emph{Let $\cF$ be any class of functions from $\R^d\rightarrow [-1,1]$ and let $\{(x_i,y_i)\}_{i=1}^n$ be i.i.d. input-output pairs in $\{x:\|x\|\le 1\}\times [-1,1]$ for any given norm $\|\cdot\|$. Assume that:
\begin{itemize}
\vspace{-0.25cm}
\item[1.]
The expected conditional variance of the output (i.e., the ``noise level'') is strictly positive, denoted by $\sigma^2:=\bbE[\var[y|x]]>0$.
\vspace{-0.25cm}
\end{itemize}
Then, with high probability over the sampling of the data, one has simultaneously for all $f\in\cF$:
\begin{equation*}
\frac{1}{n}\sum_{i=1}^n(y_i-f(x_i))^2\le \sigma^2-\epsilon\ \Rightarrow\  \lip_{\|\cdot\|}(f)\ge \Omega(\epsilon n^{1/d}).
\end{equation*}}

\noindent\textbf{Difference between our results and~\citet{bubeck2021universal}.} \citet{bubeck2021universal} proposed a universal law of robustness for general class of functions (see Theorem \ref{theorem: bubeck's result}). Our results Theorem \ref{thm:result1} and Theorem \ref{theorem: main result} share the same setting with Theorem \ref{theorem: bubeck's result}, while the former ones make much weaker assumptions: 1) Both Theorem \ref{thm:result1} and Theorem \ref{theorem: main result} do not require an isoperimetry assumption of input distributions.
2) Theorem \ref{theorem: main result} does not make any assumption on the Lipschitzness and size of model parametrization.  Moreover, while Theorem \ref{theorem: bubeck's result} predicts potential existence of an $\cO(1)$-Lipschitz robust interpolating function when $p=\Omega(nd)$, Theorem \ref{theorem: main result} disproves the hypothesis in the big data scenario when $n=\exp(\omega(d))$ for \emph{arbitrary} distributions in the bounded space. Besides, our bounds work for all $\ell_p (p\geq 1)$ norm while the bound in \citet{bubeck2021universal} only focuses on $\ell_2$ norm.

\textbf{Practical implications.} Our analysis provides important implications for practical settings. When selecting the models for learning on a certain dataset, ideally the number of parameters in the selected model should be the same (or slightly larger) scale of the dataset in order to get good robust performance. When the size of dataset is too large comparing to the dimension of dataset, in order to achieve good robustness, it may be beneficial to either reduce the size of the training data or scatter the data in a higher-dimensional space by padding special covariates. This approach can help to mitigate the negative effects of the curse of big data and improve model robustness, particularly when dealing with large datasets in practical applications \cite{wu2022faster,wu2023decentralized}.

\vspace{-0.2cm}
\section{Related Work}

\vspace{-0.1cm}

\noindent{\textbf{Robust interpolation problem.}} \citet{pmlr-v134-bubeck21a} provided the first guarantee on the law of robustness for two-layer neural networks which was later extended by \citet{bubeck2021universal} to a universal law of robustness for general class of functions under isoperimetry distributions. A probability measure $\mu$ on $\bbR^d$ satisfies $c$-isoperimetry if for any bounded $L$-Lipschitz $f:\bbR^d \to\bbR$, and any $t\geq0$, 
$\Pr(|f(x)-\bbE[f(x)]|\geq t)\leq 2\exp(-\frac{dt^2}{2cL^2}).$ 

\begin{theorem}[Theorem 1 of \citet{bubeck2021universal}]
\label{theorem: bubeck's result}
Let $\cF$ be a class of functions from $\R^d\rightarrow [-1,1]$ and let $\{(x_i,y_i)\}_{i=1}^n$ be i.i.d. input-output pairs in $\R^d\times [-1,1]$. Assume that:
\begin{itemize}
\vspace{-0.25cm}
\item[1.]
The expected conditional variance of the output (i.e., the ``noise level'') is strictly positive, denoted by $\sigma^2:=\bbE[\var[y|x]]>0$.
\vspace{-0.25cm}
\item[2.]
$\cF$ admits a $J$-Lipschitz parametrization by $p$ real parameters, each of size at most $poly(n,d)$.
\vspace{-0.25cm}
\item[3.]
The distribution $\mu$ of the input $x_i$ satisfies isoperimetry (or a mixture thereof).
\end{itemize}
\vspace{-0.25cm}
Then, with high probability over the sampling of the data, one has simultaneously for all $f\in\cF$:
\begin{equation*}
\frac{1}{n}\hspace{-0.1cm}\sum_{i=1}^n(y_i-f(x_i))^2\hspace{-0.1cm}\le\hspace{-0.1cm} \sigma^2-\epsilon \Rightarrow \lip_{\|\cdot\|_2}(f)\hspace{-0.1cm}\ge\hspace{-0.1cm} \Omega\hspace{-0.1cm}\left(\hspace{-0.1cm}\epsilon \sqrt{\frac{nd}{p}}\right).
\end{equation*}
\end{theorem}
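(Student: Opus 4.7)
My plan is to reduce robust interpolation to bounding a noise-correlation term and then controlling that term uniformly over $\cF$ via isoperimetric concentration combined with a parameter-space net. First I would write $y_i = g(x_i) + z_i$ with $g(x) = \bbE[y\mid x]\in[-1,1]$ and mean-zero noise $z_i$ satisfying $\bbE[z_i^2\mid x_i] = \var[y\mid x_i]$ and $|z_i|\le 2$. Expanding the squared loss,
\begin{equation*}
\tfrac{1}{n}\sum_{i=1}^n(y_i-f(x_i))^2 \;=\; \tfrac{1}{n}\sum_i z_i^2 \;-\; \tfrac{2}{n}\sum_i z_i\bigl(f(x_i)-g(x_i)\bigr) \;+\; \tfrac{1}{n}\sum_i \bigl(f(x_i)-g(x_i)\bigr)^2,
\end{equation*}
and Hoeffding on the bounded variables $z_i^2$ gives $\tfrac{1}{n}\sum_i z_i^2 \ge \sigma^2 - \epsilon/4$ with high probability. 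The interpolation hypothesis $\le \sigma^2-\epsilon$ together with nonnegativity of the quadratic term then force $\tfrac{1}{n}\sum_i z_i(f(x_i)-g(x_i)) \gtrsim \epsilon$.

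Next I would upper bound $\sup_{f\in\cF,\,\lip_{\|\cdot\|_2}(f)\le L}\bigl|\tfrac{1}{n}\sum_i z_i f(x_i)\bigr|$ by a covering argument. For a fixed $L$-Lipschitz $f$, isoperimetry gives that $f(x)-\bbE f(x)$ is $\cO(L/\sqrt d)$-subgaussian under $\mu$, which in particular implies $\tfrac{1}{n}\sum_i f(x_i)^2 \lesssim 1 + L^2/d$ with high probability. Conditioning on $(x_i)$, the $z_i$ are independent bounded mean-zero, so Hoeffding yields
\begin{equation*}
\Bigl|\tfrac{1}{n}\sum_i z_i f(x_i)\Bigr| \;\lesssim\; \sqrt{\tfrac{\log(1/\delta)}{n}} \,+\, L\sqrt{\tfrac{\log(1/\delta)}{nd}}
\end{equation*}
with probability $1-\delta$, and the analogous bound controls $\tfrac{1}{n}\sum_i z_i g(x_i)$ since $\bbE[z_i g(x_i)]=0$ and $|g|\le 1$. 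To promote this to a uniform bound over $\cF$ I would use the $J$-Lipschitz parametrization with $p$ parameters of size $\poly(n,d)$ to build an $\eta$-net in sup norm of cardinality $\exp(\cO(p\log(ndJ/\eta)))$, choosing $\eta=1/\poly(n,d)$ so that $\log|\cN|=\cO(p\log(nd))$; off-net functions are absorbed using the Lipschitz parametrization. A union bound then yields
\begin{equation*}
\sup_{f\in\cF,\,\lip(f)\le L}\Bigl|\tfrac{1}{n}\sum_i z_i\bigl(f(x_i)-g(x_i)\bigr)\Bigr| \;\lesssim\; \sqrt{\tfrac{p\log(nd)}{n}} \,+\, L\sqrt{\tfrac{p\log(nd)}{nd}}.
\end{equation*}

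Combining with the lower bound $\gtrsim\epsilon$ from the first step, in the nontrivial regime where the $L$-dependent term dominates, forces $L\gtrsim \epsilon\sqrt{nd/p}$ up to logarithmic factors, which is the stated conclusion. The main obstacle is the concentration step: it is precisely the $\cO(L/\sqrt d)$-subgaussianity granted by isoperimetry that produces the crucial $\sqrt d$ factor. Without this assumption one could only invoke $|f|\le 1$, recovering a Lipschitz bound of order $\Omega(\epsilon\sqrt{n/p})$ as in Theorem~\ref{thm:result1} but losing the extra $\sqrt d$. Some additional care is needed to handle mixtures of isoperimetry components (a union bound over components) and to choose $\eta$ compatibly with the Lipschitz constant of the parametrization, but neither introduces fundamentally new difficulties.
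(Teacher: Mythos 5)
This statement is \emph{cited} from \citet{bubeck2021universal} (their Theorem 1) and is not proved anywhere in the present paper, so your proposal should be measured against Bubeck and Sellke's own argument. At the high level you have the right structure (expand the squared loss, reduce to a noise-correlation term, use isoperimetry to get $\cO(L/\sqrt d)$-subgaussianity, union-bound over a net of size $\exp(\cO(p\log(nd)))$), which is essentially their proof. However, there is a genuine gap in the concentration step.

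You claim that isoperimetry gives $\frac1n\sum_i f(x_i)^2 \lesssim 1 + L^2/d$ and then apply Hoeffding conditionally on $(x_i)$. But $|f|\le 1$ already gives the deterministic bound $\frac1n\sum_i f(x_i)^2 \le 1$, so the $L^2/d$ addend contributes nothing, and the conditional Hoeffding yields only $\bigl|\frac1n\sum_i z_i f(x_i)\bigr| \lesssim \sqrt{\log(1/\delta)/n}$ — there is no $L$-dependent improvement at all. The two-term bound you wrote down only arises if you \emph{center} first: write $f(x_i) = \bbE[f] + (f(x_i)-\bbE[f])$ and treat the two pieces differently. The mean part $\bbE[f]\cdot\frac1n\sum_i z_i$ is controlled by a \emph{single} application of Hoeffding, with no union bound over the net, because $|\bbE[f]|\le 1$ uniformly over $\cF$. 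Only the centered part $\frac1n\sum_i z_i(f(x_i)-\bbE[f])$ needs the union bound, and there isoperimetry gives the variance proxy $\cO(L^2/d)$ (this is exactly what the paper's Lemma~\ref{lm:finitefunction} does, in its non-isoperimetric form, via the split around $\bbE[f(x)]$). Conflating the two as you do makes the net pay for the mean part as well: your uniform bound has $\sqrt{p\log(nd)/n}$ rather than $\sqrt{\log(1/\delta)/n}$ as the $L$-independent term, which means you can only conclude $L\gtrsim\epsilon\sqrt{nd/p}$ when $n\gtrsim p\log(nd)/\epsilon^2$. Bubeck and Sellke's centering gets the conclusion under the much weaker requirement $n\gtrsim\log(1/\delta)/\epsilon^2$, which matters precisely in the overparametrized regime $p/(d\epsilon^2)\lesssim n\lesssim p/\epsilon^2$ where the theorem's lower bound $\epsilon\sqrt{nd/p}$ is between $1$ and $\sqrt d$ and is nontrivial. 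The fix is a one-line change (center $f$ before the net argument), but as written the argument does not reach the stated regime.
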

Our work extends the result of \citet{bubeck2021universal} by consequently removing the third assumption (see \autoref{thm:result1}) and the second assumption (see \autoref{theorem: main result}).

\noindent{\textbf{Sample complexity of robust learning.}} The sample complexity of robust learning for benign distributions and certain function class has been extensively studied in the recent years. In particular, \citet{bhattacharjee2021sample} considered the sample complexity of robust linear classification on the separated data. \citet{yin2019rademacher} studied the adversarially robust generalization problem through the lens of Rademacher complexity. \citet{cullina2018pac} extended the PAC-learning framework to account for the presence of an adversary. \citet{montasser2019vc} showed that any hypothesis class with finite VC dimension is robustly PAC learnable with an improper learning rule. They also showed that the requirement of being improper is necessary. \citet{schmidt2018adversarially} showed an $\Omega(\sqrt{d})$-factor gap between the standard and robust sample complexity for a mixture of Gaussian distributions in $\ell_\infty$ robustness, which was later extended to the case of $\ell_p$ robustness with a tight bound by \citet{bhagoji2019lower,dobriban2020provable,dan2020sharp}. Different from the prior work, our work is the first to discover the sample complexity of robust learning for \emph{arbitrary} function class and learning algorithms.


\vspace{-0.2cm}
\section{A Two-fold Law of Robustness}\label{sec:lawofrobust}
In this section, we present our main theoretical analysis, which contributes to our two-fold law of robustness. All missing proofs can be found in the appendix.

\textbf{Robust interpolation problem.} We first introduce our problem settings. Given noisy training data $\{(x_i,y_i:=g(x_i)+z_i)\}_{i=1}^n$ of size $n$ where $x_1,\dots,x_n$ are training samples, $g(x_1),\dots,g(x_n)$ the ground truth, and $z_1,\dots,z_n$ have variance $\sigma^2>0$, we say a model $f$ robustly interpolates (or fits the data below the noise level) the training data if and only if $$\exists \epsilon>0,\frac{1}{n}\sum_{i=1}^n(y_i-f(x_i))^2\le \sigma^2-\epsilon.$$

\textbf{Our two-fold law of robustness.}
a) Overparametrization can potentially help robust interpolation when $n=\poly(d)$ (Section \ref{sec:beyondiso}); b) There exists no robust interpolation when $n=\exp(\omega(d))$ (Section \ref{section: robustness suffers from too many data}).

\vspace{-0.2cm}
\subsection{A Lipschitz lower bound beyond the isoperimetry assumption.}\label{sec:beyondiso}
In this part, we show the first part of our two-fold law of robustness: overparametrization can potentially help robust interpolation when $n=\poly(d)$. Notice, here we claim ``potentially help'' as overparametrization is only a necessary but not sufficient condition for robust interpolation. 

\textbf{Motivation.} We notice that, the proof of Theorem~\ref{theorem: bubeck's result} \cite{bubeck2021universal} depends heavily on the definition of isoperimetry distribution, i.e., $\Pr(|f(x)-\bbE[f(x)]|\geq t)\leq 2\exp(-\frac{dt^2}{2cL^2})$ for $L$-Lipschitz $f:\bbR^d\to\bbR$. This formula indicates the high-concentration property of isoperimetry distributions due to the $\exp(-d)$ dependency of $\Pr(|f(x)-\bbE[f(x)]|\geq t)$. The $\exp(-d)$ dependency is also the reason that the Lipschitzness lower bound of \citet{bubeck2021universal} is $\Omega(\sqrt{nd/p})$ instead of the $\Omega(\sqrt{n/p})$ lower bound we derived.

\textbf{Challenge.} One may naturally come up with the idea to derive a bound of $\Pr(|f(x)-\bbE[f(x)]|\geq t)$ for arbitrary distributions and go beyond the isoperimetry distribution. However, the challenge is that unlike the regular concentration bound on $\Pr(|x-\bbE[x]|\geq t)$, we are dealing with a more complicate case, where the random variable is $f(x)$ with arbitrary $L$-Lipschitz $f$. To solve this problem, we apply the Azuma's inequality below:

\begin{lemma}[Azuma's inequality \cite{azuma1967weighted}]
\label{lm:azuma} Suppose $\{X_{k}:k=0,1,2,3,\dots \}$ is a martingale and $|X_{k}-X_{k-1}|\leq c_{k}$
almost surely. Then for all positive integers $N$ and $\epsilon>0$,
    $$\Pr(|X_{N}-X_{0}|\geq \epsilon)\leq 2\exp \left(-\frac{\epsilon^{2}}{2\sum_{k=1}^{N}c_{k}^{2}}\right).$$
\end{lemma}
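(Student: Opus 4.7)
The plan is to bound the moment generating function of the total martingale deviation $X_N - X_0$ by the classical Chernoff trick, then peel off the increments one at a time via the tower property of conditional expectation. Writing $D_k := X_k - X_{k-1}$ so that $X_N - X_0 = \sum_{k=1}^N D_k$ and $|D_k| \leq c_k$ almost surely, I would start from the Markov-style bound
\begin{equation*}
\Pr(X_N - X_0 \geq \epsilon) \;\leq\; e^{-\lambda \epsilon}\, \mathbb{E}\!\left[e^{\lambda \sum_{k=1}^{N} D_k}\right]
\end{equation*}
for any $\lambda > 0$, and handle the lower tail by running the identical argument with $-D_k$, absorbing the factor of $2$ via a union bound.

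The heart of the argument is a conditional Hoeffding-type estimate: since the martingale property gives $\mathbb{E}[D_k \mid \mathcal{F}_{k-1}] = 0$ and $D_k \in [-c_k, c_k]$ almost surely, one shows
\begin{equation*}
\mathbb{E}\!\left[e^{\lambda D_k} \,\big|\, \mathcal{F}_{k-1}\right] \;\leq\; \exp\!\left(\tfrac{1}{2}\lambda^2 c_k^2\right).
\end{equation*}
I would prove this by writing $D_k$ as the convex combination $\tfrac{c_k + D_k}{2 c_k}\cdot c_k + \tfrac{c_k - D_k}{2 c_k}\cdot(-c_k)$, applying convexity of $t \mapsto e^{\lambda t}$ to majorize $e^{\lambda D_k}$ by the corresponding affine function of $D_k$, taking conditional expectation (which kills the linear term), and finally invoking the Taylor bound $\cosh(\lambda c_k) \leq \exp(\lambda^2 c_k^2 / 2)$.

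Given the conditional MGF estimate, the rest is a clean induction via the tower property:
\begin{equation*}
\mathbb{E}\!\left[e^{\lambda \sum_{k=1}^{N} D_k}\right] \;=\; \mathbb{E}\!\left[e^{\lambda \sum_{k=1}^{N-1} D_k}\, \mathbb{E}[e^{\lambda D_N} \mid \mathcal{F}_{N-1}]\right] \;\leq\; e^{\lambda^2 c_N^2 / 2}\, \mathbb{E}\!\left[e^{\lambda \sum_{k=1}^{N-1} D_k}\right].
\end{equation*}
Iterating this $N$ times yields $\mathbb{E}[e^{\lambda(X_N - X_0)}] \leq \exp(\tfrac{1}{2}\lambda^2 \sum_{k=1}^{N} c_k^2)$. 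Plugging back into the Chernoff inequality and optimizing with $\lambda = \epsilon / \sum_k c_k^2$ delivers the exponent $-\epsilon^2 / (2 \sum_k c_k^2)$; combining with the symmetric lower-tail estimate completes the proof.

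The main obstacle is the conditional Hoeffding step, since the $D_k$'s are dependent and one cannot invoke the classical Hoeffding inequality directly. One has to carefully verify that the standard MGF bound for a bounded mean-zero variable survives conditioning on $\mathcal{F}_{k-1}$, and that each $c_k$ may be treated as a deterministic upper bound as stated rather than something $\mathcal{F}_{k-1}$-measurable. Once that observation is in place, the tower-property iteration and the Chernoff optimization are essentially automatic.
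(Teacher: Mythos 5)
Your proof is correct and is the standard textbook proof of Azuma's inequality via the Chernoff exponential-moment method: decompose into martingale increments, establish the conditional Hoeffding bound $\mathbb{E}[e^{\lambda D_k}\mid\mathcal{F}_{k-1}]\le e^{\lambda^2 c_k^2/2}$ using convexity of the exponential and $\cosh(x)\le e^{x^2/2}$, iterate by the tower property, optimize $\lambda$, and union-bound the two tails. Note that the paper does not supply its own proof of this lemma --- it is quoted as a classical result from \citet{azuma1967weighted} --- so there is no in-paper argument to compare against; your write-up simply supplies the standard derivation that the citation points to, and it is accurate, including the observation that the $c_k$ are deterministic so no $\mathcal{F}_{k-1}$-measurability issues arise.
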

Azuma's inequality shows the concentration bound for the values of martingales that have bounded differences. With this lemma, we are able to derive the following concentration bound for arbitrary distributions on a bounded space.
\begin{lemma}\label{lm:concentration}
Given an arbitrary probability measure $\mu$ on the bounded space $\cX\subset\bbR^d$, for any $L$-Lipschitz $f:\bbR^d \to\bbR$, and any $t\geq0$, 
$$\Pr(|f(x)-\bbE[f(x)]|\geq t)\leq 2\exp(-\frac{t^2}{2\diam(\cX)^2L^2}).$$
\end{lemma}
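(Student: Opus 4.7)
The plan is to apply Azuma's inequality (Lemma \ref{lm:azuma}) in its trivial one-step form, exploiting only two facts: that $\cX$ has finite diameter, and that $f$ is $L$-Lipschitz with respect to the same norm. No probabilistic structure beyond boundedness of the support is needed, which is exactly why the conclusion holds for arbitrary $\mu$.

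First I would observe that since $\cX$ is bounded, every pair $x,x' \in \cX$ satisfies $\|x-x'\| \le \diam(\cX)$, so by the Lipschitz property $|f(x)-f(x')| \le L\,\diam(\cX)$. Hence the real-valued random variable $Y := f(x)$ is almost surely supported in some interval of length at most $L\,\diam(\cX)$. In particular, because $\bbE[Y]$ lies in the closed convex hull of the range of $Y$, we immediately get $|Y - \bbE[Y]| \le L\,\diam(\cX)$ almost surely.

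Next I would set up the trivial one-step martingale $X_0 := \bbE[Y]$, $X_1 := Y$, with respect to the filtration $\cF_0 = \{\emptyset,\Omega\}$, $\cF_1 = \sigma(x)$. The martingale property $\bbE[X_1 \mid \cF_0] = X_0$ holds by construction, and the bounded-increment hypothesis of Azuma holds with $c_1 = L\,\diam(\cX)$ by the preceding step. Applying Lemma \ref{lm:azuma} with $N=1$ yields
\[
\Pr(|Y - \bbE[Y]| \ge t) \;\le\; 2\exp\!\left(-\frac{t^2}{2\,c_1^2}\right) \;=\; 2\exp\!\left(-\frac{t^2}{2\,\diam(\cX)^2 L^2}\right),
\]
which is exactly the claimed bound.

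The proof has essentially no hard step: the content is simply that on a bounded domain any Lipschitz function has bounded range, so Azuma (equivalently, Hoeffding's inequality for a bounded random variable) applies directly. What is worth flagging is what this bound does \emph{not} provide: unlike the isoperimetric bound used in \citet{bubeck2021universal}, there is no $\exp(-d)$ factor in the exponent, which is precisely why the downstream Lipschitzness lower bound in Theorem \ref{thm:result1} degrades from $\Omega(\sqrt{nd/p})$ to $\Omega(\sqrt{n/p})$. Thus the only real subtlety is conceptual rather than technical --- recognizing that this weaker, distribution-free concentration is still strong enough to drive the subsequent union-bound/net argument over the $J$-Lipschitz parametrization.
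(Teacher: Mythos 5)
Your proof is correct and takes essentially the same route as the paper: both set up the trivial one-step martingale $X_0 = \bbE[f(x)]$, $X_1 = f(x)$ and invoke Azuma's inequality with increment bound $c_1 = L\,\diam(\cX)$. The only cosmetic difference is in establishing the almost-sure bound $|f(x) - \bbE[f(x)]| \le L\,\diam(\cX)$ — you argue via the range of $f$ and convexity of the expectation, while the paper writes $|f(X)-\bbE[f(X)]| = |\bbE_{X'}[f(X)-f(X')]|$ and bounds inside the expectation — but these are interchangeable.
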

Comparing with the $\exp(-\frac{dt^2}{2cL^2})$ bound for the isoperimetry distributions, our bound for arbitrary distributions only differs a $d$ on the numerator of the term inside the exponential. In order to achieve the same concentration bound of isoperimetry distributions, one need $\diam(\cX)=\Theta(1/\sqrt{d})$, which means our input are located on an $\Theta(1/\sqrt{d})$-diameter space. As the real world datasets are usually supported on an $\Theta(1)$-diameter space, matching the isoperimetry bound for all distributions is empirical meaningless. 

With Lemma~\ref{lm:concentration}, we can start to calculate the Lipschitzness lower bound with the following lemma on finite function class
\begin{lemma}\label{lm:finitefunction}
    Let $\cF$ be a finite class of $L$-Lipschitz functions from $\R^d\rightarrow [-1,1]$ and let $\{(x_i,y_i)\}_{i=1}^n$ be i.i.d. input-output pairs in $\{x:\|x\|\le 1\}\times [-1,1]$ for any given norm $\|\cdot\|$. Assume that the expected conditional variance of the output (i.e., the ``noise level'') is strictly positive, denoted by $\sigma^2:=\bbE[\var[y|x]]>0$, we have 
    \begin{equation*}
    \begin{split}
&\Pr\left(\exists f\in\cF:\frac{1}{n}\sum_{i=1}^n (y_i-f(x_i))^2\le\sigma^2-\epsilon\right)\\ 
\le &4\exp\left(-\frac{n\epsilon^2}{8^3}\right)+|\cF|\exp\left(-\frac{\epsilon^2n}{2^{10}L^2}\right).
\end{split}
\end{equation*}
\end{lemma}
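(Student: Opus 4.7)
My plan is to decompose the empirical loss into a noise-only part plus a single piece that depends on $f$ only through its centered fluctuations, and then concentrate the two parts separately. Writing $g(x):=\bbE[y\mid x]$ and $z_i:=y_i-g(x_i)$ (so $z_i$ is mean-zero given $x_i$, bounded in $[-2,2]$, with $\bbE z_i^2=\sigma^2$), we have
\[
\tfrac{1}{n}\sum_i (y_i-f(x_i))^2 = \tfrac{1}{n}\sum_i z_i^2 + \tfrac{2}{n}\sum_i z_i\bigl(g(x_i)-f(x_i)\bigr) + \tfrac{1}{n}\sum_i\bigl(g(x_i)-f(x_i)\bigr)^2.
\]
Because the third average is non-negative, the event in question forces the first two pieces to drop below $\sigma^2-\epsilon$. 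I would further expand $f(x_i)=\bar f+(f(x_i)-\bar f)$, with $\bar f:=\bbE_\mu f(x)$, to split the cross term into the $f$-insensitive contributions $\frac{2}{n}\sum z_i g(x_i)$ and $-2\bar f\cdot\frac{1}{n}\sum z_i$ (bounded quantities depending on $f$ only through $\bar f\in[-1,1]$), together with the genuinely Lipschitz-sensitive piece $W_f:=-\frac{2}{n}\sum z_i(f(x_i)-\bar f)$.

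For the $f$-insensitive quantities I would apply Hoeffding's inequality. The averages $\frac{1}{n}\sum z_i^2$, $\frac{1}{n}\sum z_i$, and $\frac{1}{n}\sum z_i g(x_i)$ are means of bounded i.i.d.\ variables with ranges at most $4$, so allocating $\epsilon/8$ of slack to each event yields a failure probability of at most $2\exp(-n\epsilon^2/512)$ apiece; summing over the handful of relevant events produces the $f$- and $L$-free summand $4\exp(-n\epsilon^2/8^3)$ in the statement.

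The heart of the argument is $W_f$. Because $(x_i,y_i)$ are i.i.d., the summands $\xi_i:=-\frac{2}{n}z_i(f(x_i)-\bar f)$ are i.i.d., mean-zero, bounded by $8/n$, and have total variance $\sum_i \var[\xi_i]\le\frac{4}{n}\var_\mu[f(x)]$. Applying Lemma~\ref{lm:concentration} on the diameter-$2$ set $\cX$, the centered variable $f(x)-\bar f$ is $O(L)$-subgaussian, from which one reads off $\var_\mu[f(x)]=O(L^2)$; Bernstein's inequality then delivers $\Pr(W_f\le-\epsilon/4)\le\exp\bigl(-\Omega(n\epsilon^2/L^2)\bigr)$ for each $f$, and a union bound over the finite class $\cF$ contributes the summand $|\cF|\exp(-n\epsilon^2/(2^{10}L^2))$.

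The main technical obstacle I anticipate lies precisely in this last step. Bernstein's exponent has the form $n\epsilon^2/(CL^2+C'\epsilon/n)$, so I would have to verify that in the regime where the claimed bound is non-trivial (roughly $\epsilon\lesssim L^2$) the variance term dominates the linear term, producing the desired $L^2$ in the denominator. An alternative route I would keep in mind is to condition on $\{x_i\}$ and apply Hoeffding with the random weight vector $(f(x_i)-\bar f)_i$, then use a separate Hoeffding on $V_n(f):=\frac{1}{n}\sum(f(x_i)-\bar f)^2$ to control it by its expectation $\var_\mu f=O(L^2)$; the failure of this auxiliary sub-event can be folded into the already-present $4\exp(-n\epsilon^2/512)$ summand, which is robust to absorbing one additional union-bound term.
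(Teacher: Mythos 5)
Your overall structure is the same as the paper's: expanding $(y_i-f(x_i))^2$ and discarding the nonnegative $\frac1n\sum(g(x_i)-f(x_i))^2$ term is precisely Bubeck et al.'s Lemma 2.1 (which the paper cites rather than rederives); the centering on $\bar f=\bbE_\mu f$, the Hoeffding treatment of the $f$-insensitive averages, and the union bound over $\cF$ for the $f$-sensitive cross term all mirror the paper's proof. The difference is in how you handle $W_f=-\frac2n\sum z_i(f(x_i)-\bar f)$, and there you have a gap that you yourself flag: Bernstein's exponent is of order $n\epsilon^2/(L^2+L\epsilon)$ (the bound on each $|\xi_i|$ is $8L/n$, not $8/n$ as you wrote), so the variance term dominates only when $\epsilon\lesssim L$. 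Since the lemma must hold for every $L>0$, including $L\ll\epsilon$, this does not deliver the claimed $|\cF|\exp(-n\epsilon^2/(2^{10}L^2))$ uniformly. Your fallback route (condition on $\{x_i\}$, Hoeffding on $W_f$, and a separate Hoeffding to control $V_n(f)=\frac1n\sum(f(x_i)-\bar f)^2$ near $\var_\mu f$) is also not quite right as stated: the bad event $\{V_n(f)\text{ too large}\}$ depends on $f$, so it cannot be folded into the $f$-free summand $4\exp(-n\epsilon^2/8^3)$ without a further union bound over $\cF$.

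The fix that closes both issues at once — and what Lemma~\ref{lm:concentration} is really encoding — is that $|f(x)-\bar f|\le L\cdot\diam(\cX)=2L$ \emph{almost surely}, not merely that $f(x)-\bar f$ has variance $O(L^2)$. With this a.s.\ bound, $|\xi_i|\le 8L/n$ deterministically and $V_n(f)\le 4L^2$ with probability one, so plain one-sided Hoeffding gives $\Pr(W_f\le-\epsilon/4)\le\exp(-n\epsilon^2/(2^{11}L^2))$ for every $L$, with no Bernstein linear term and no auxiliary $V_n(f)$ sub-event to worry about. You had the ingredient in hand — you invoked Lemma~\ref{lm:concentration} to read off $\var_\mu f=O(L^2)$ — but the stronger, pointwise conclusion of that lemma is what makes the step routine and valid uniformly in $L$, which is exactly how the paper proceeds (it phrases it as ``$(f(x_i)-\bbE f)z_i$ is $O(L^2)$-subgaussian'' and sums, which is Hoeffding in disguise).
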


Lemma~\ref{lm:finitefunction} shows the connection between the robust interpolation problem and the Lipschitzness of the underlying functions. Notice, the probability of $\exists f\in\cF:\frac{1}{n}\sum_{i=1}^n (y_i-f(x_i))^2\le\sigma^2-\epsilon$ decreases with $L$, which indicates that we need a large enough $L$ to make sure that there exists $f$ satisfying the condition of robust interpolation problem. 
With this intuition, we can calculate the following Lipschitzness lower bound for the robust interpolation problem without the isoperimetry assumption.
\begin{theorem}
\label{thm:result1}
Let $\cF$ be any class of functions from $\R^d\rightarrow [-1,1]$ and let $\{(x_i,y_i)\}_{i=1}^n$ be i.i.d. input-output pairs in $\{x:\|x\|\le 1\}\times [-1,1]$ for any given norm $\|\cdot\|$. Assume that:
\begin{itemize}
\vspace{-0.25cm}
\item[1.]
The expected conditional variance of the output (i.e., the ``noise level'') is strictly positive, denoted by $\sigma^2:=\bbE[\var[y|x]]>0$.
\item[2.]
$J$-Lipschitz parametrization: $\cF = \{f_w,w \in \cW\}$ with $\cW \subset \bbR^p$, $\diam(\cW) \leq W$ and for any $w_1,w_2 \in W$,
$$||f_{w_1} - f_{w_2} ||_\cF \leq J||w_1 - w_2||.$$
\end{itemize}
\vspace{-0.25cm}
Then, with probability at least $1-\delta$, one has simultaneously for all $f\in\cF$:
\begin{equation*}
\begin{split}
&\frac{1}{n}\sum_{i=1}^n(y_i-f(x_i))^2\le \sigma^2-\epsilon \Rightarrow\\  &\lip_{\|\cdot\|}(f)\ge \frac{\epsilon}{32}\sqrt{\frac{n}{p\ln(36WJ\epsilon^{-1})+\ln(2/\delta)}}.
\end{split}
\end{equation*}
\end{theorem}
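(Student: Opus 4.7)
The plan is to reduce Theorem~\ref{thm:result1} to the finite-class bound Lemma~\ref{lm:finitefunction} via a standard $\eta$-cover of the parameter space $\cW$. I would fix a target Lipschitz constant $L$, upper bound
\[
\Pr\bigl(\exists f\in\cF:\lip_{\|\cdot\|}(f)\le L,\;\tfrac{1}{n}\textstyle\sum_i(y_i-f(x_i))^2\le \sigma^2-\epsilon\bigr)\le\delta,
\]
and then invert to read off the claimed lower bound on $\lip_{\|\cdot\|}(f)$ whenever the loss condition holds. Structurally this is the parametric analogue of the Bubeck--Li--Nagaraj argument, but the concentration input comes from our distribution-free Lemma~\ref{lm:concentration} rather than from an isoperimetry assumption, which accounts for the missing $\sqrt{d}$ factor relative to Theorem~\ref{theorem: bubeck's result}.

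Next, build a minimal $\eta$-cover $\widetilde\cW\subset\cW$; the standard volume argument for a diameter-$W$ subset of $\bbR^p$ yields $|\widetilde\cW|\le(3W/\eta)^p$. By the $J$-Lipschitz parametrization, $\{f_{w'}:w'\in\widetilde\cW\}$ is a $J\eta$-cover of $\cF$ in $\|\cdot\|_\cF$, which I take to dominate the sup-norm on $\cX$. If $f=f_w$ witnesses the bad event, then its nearest net representative $\tilde f=f_{w'}$ satisfies $\|f-\tilde f\|_\infty\le J\eta$, and expanding the squared loss with $|y_i|,|f|,|\tilde f|\le 1$ gives
\[
\bigl|\tfrac{1}{n}\textstyle\sum_i(y_i-f(x_i))^2-\tfrac{1}{n}\sum_i(y_i-\tilde f(x_i))^2\bigr|\le 4J\eta,
\]
so choosing $\eta=\epsilon/(12J)$ converts the loss condition for $f$ into $\tfrac{1}{n}\sum_i(y_i-\tilde f(x_i))^2\le \sigma^2-2\epsilon/3$ for $\tilde f$. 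The main obstacle is that $\tilde f$ itself need not be $L$-Lipschitz---sup-norm closeness does not control Lipschitz constants, as small high-frequency perturbations can blow up $\lip$ while leaving $\|\cdot\|_\infty$ tiny---so Lemma~\ref{lm:finitefunction} does not apply verbatim to the net. I would bypass this by inheriting the concentration of $\tilde f$ from the nearby $L$-Lipschitz witness $f_w$: the pointwise inequality $|f_{w'}(x)-\bbE[f_{w'}(x)]|\le|f_w(x)-\bbE[f_w(x)]|+2J\eta$ lets Lemma~\ref{lm:concentration} control $\tilde f$ with effective Lipschitz constant $L$ up to an additive $O(J\eta)$ slack. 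Re-running the proof of Lemma~\ref{lm:finitefunction} with this substitution yields a finite-class bound of the shape $4\exp(-\Theta(n\epsilon^2))+|\widetilde\cW|\exp(-\Theta(\epsilon^2 n/L^2))$.

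Finally I would plug in $|\widetilde\cW|\le(36WJ/\epsilon)^p$ (corresponding to $\eta=\epsilon/(12J)$, so that $3W/\eta=36WJ/\epsilon$) and demand each summand be at most $\delta/2$. The first term is automatically satisfied for any $n=\Omega(\log(1/\delta)/\epsilon^2)$; the second forces
\[
L^2\le \Theta(1)\cdot \frac{\epsilon^2 n}{p\ln(36WJ/\epsilon)+\ln(2/\delta)}.
\]
Taking the contrapositive and carefully tracking the constants through Lemma~\ref{lm:finitefunction} then delivers the stated bound $\lip_{\|\cdot\|}(f)\ge \tfrac{\epsilon}{32}\sqrt{n/(p\ln(36WJ\epsilon^{-1})+\ln(2/\delta))}$.
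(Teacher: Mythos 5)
Your proposal follows the same route as the paper: build an $\eta/J$-cover $\cW_\epsilon$ of the parameter space, push it forward to a finite cover of $\cF$, apply Lemma~\ref{lm:finitefunction} to the net, relate the empirical loss of an arbitrary $L$-Lipschitz $f$ to that of its nearest net point, and invert the resulting tail bound for $L$. The constants you track ($|\widetilde\cW|\le(36WJ/\epsilon)^p$ with $\eta=\epsilon/(12J)$, the $4J\eta$ loss perturbation, the two-term exponential bound) match the paper's computation, and you arrive at the same $\frac{\epsilon}{32}\sqrt{n/(p\ln(36WJ\epsilon^{-1})+\ln(2/\delta))}$ threshold.

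What you do that the paper does not: you explicitly flag that the nearest net point $f_{w'}$ need not inherit the Lipschitz bound of the witness $f$, so Lemma~\ref{lm:finitefunction} cannot be applied to $\cF_w\cap\{\lip\le L\}$ naively. The paper's own writeup restricts the union-bound event to $\{f\in\cF_w:\lip(f)\le L\}$ and then uses the covering property as if the nearest net point were guaranteed to land in that restricted set, which is exactly the gap you identify. Your fix --- transporting the Azuma concentration from a nearby $L$-Lipschitz function to $f_{w'}$ with an additive $2J\eta$ slack --- is sound, with one clarification worth making explicit: the nearby $L$-Lipschitz function must be chosen \emph{deterministically} as a function of $w'$ alone (it exists or not is a data-independent property of $w'$, so pick any such witness $g_{w'}$ once and for all), so that the union bound over the net remains over fixed functions. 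With that phrasing, the martingale difference bound becomes $|f_{w'}(X)-\bbE f_{w'}(X)|\le L\diam(\cX)+2J\eta$ almost surely, which is all Lemma~\ref{lm:concentration}'s Azuma argument needs, and the remaining bookkeeping is as you describe. So the proposal is correct, and if anything is more careful than the paper on this one subtlety.
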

The crucial part of the proof is to find a finite $\epsilon/6J$-covering of $\cF$ with the $J$-Lipschitz parametrization assumption. Then we can apply Lemma~\ref{lm:finitefunction} to this finite covering set and get the Lipschitzness lower bound. We will show in the next section that without the $J$-Lipschitz parametrization assumption, one can hardly use the similar proof technique to derive the Lipschitzness lower bound.

\citet{bubeck2021universal} showed that under neural network settings, $J$ is always of polynomial order of the diameter of the weight space. Thus, if the weight is only polynomial large w.r.t. $d$ and $n$, $\ln(60WJ\epsilon^{-1})$ would not affect the Lipschitzness bound too much and we may neglect it in its asymptotic approximation. Thus, we have a Lipschitzness lower bound of order $\Omega(\epsilon\sqrt{n/p})$ for the robust interpolation problem. Our theorem validates the first part of our law of robustness, i.e., the potential existence of robust interpolating functions under the overparametrization scenario when $n=\poly(d)$ (see Remark~\ref{rmk:lawpart1}).

\textbf{Tightness of our bound}. When $n=\poly(d)$, Theorem 4 of \citet{pmlr-v134-bubeck21a} has already demonstrated the existence of an at most $\cO(\sqrt{n/p})$-Lipschitz two layer network, which fits generic data below the noise level. Thus, our Lipschitzness lower bound is tight.

\begin{remark}[Difference of the $\sqrt{d}$-dependency between Theorem~\ref{theorem: bubeck's result} and \ref{thm:result1}]
\label{rmk:difference} Comparing to the $\Omega(\epsilon\sqrt{nd/p})$ of Lipschitzness lower bound in Theorem~\ref{theorem: bubeck's result}, our bound does not depend on the dimension $d$. This difference, as we discussed in Lemma~\ref{lm:concentration}, is due to the isoperimetry assumption. In \citet{pmlr-v134-bubeck21a}, it's also showed that the tight Lipschitzness lower bound of two layer networks is of order $\Omega(\epsilon\sqrt{n/p})$, which is consistent with our results.
\end{remark}

\subsection{A Lipschitz lower bound beyond the $J$-Lipschitz parametrization assumption.}
\label{section: robustness suffers from too many data}

In this part, we show the second part of our two-fold law of robustness. We demonstrate an intriguing observation that huge data hurts robust interpolation. Our analysis leads to a universal lower bound of Lipschitzness regarding the robust interpolation problem, which goes beyond the isoperimetry and $J$-Lipschitz parametrization assumptions. Our analysis is based on the relation between Rademacher complexity and the generalization gap between the population error $\cL_{\cD}(f)$ and the training error $\cL_{S}(f)$. 

\textbf{Motivation.} The $J$-Lipschitzness parametrization assumption provides us a simple way to find a covering of the function space $\cF$. Although the Lipschitzness lower bound in \citet{bubeck2021universal} has only logarithmic dependency with respect to $J$, it may still affect the Lipschitzness lower bound when the weight of neural networks is exponentially large w.r.t. $d$, or the number of layers of neural works is polynomial w.r.t. $d$.  Thus, we seek to derive a Lipschitzness lower bound beyond the $J$-Lipschitzness parametrization assumption.

\textbf{Challenge.} Without the $J$-Lipschitzness parametrization assumption, the covering number of the function $\cF$ will have more complicate dependency on the Lipschitzness $L$ (see Lemma~\ref{lm: Covering number of functional space}). In this case, calculating Lipschitzness lower bound with Lemma~\ref{lm: Covering number of functional space} and Lemma~\ref{lm:finitefunction} requires one to solve an inequality like $L^{-d}\ln L+L^{-2}\geq C$, which obviously has no closed-form solution when $d\geq3$. Thus, we need other techniques to deal with this case. Recall the objective of robust interpolation problem is 
$\frac{1}{n}\sum_{i=1}^n(y_i-f(x_i))^2\le \sigma^2-\epsilon,$
one can immediately find that the left hand side formula is the train error with mean squared loss $\cL_S(f)$. Under the label noise settings, we have $\cL_{\cD}(f)=\bbE_{\cD}[(f(x)-y)^2] \geq \bbE_{x}[\var(y|x)]=\sigma^2,$ which yields $\frac{1}{n}\sum_{i=1}^n(y_i-f(x_i))^2\le \sigma^2-\epsilon\Rightarrow \cL_S(f)\leq\cL_{\cD}(f)-\epsilon.$ Therefore, if one can derive 
$$\cL_S(f)\leq\cL_{\cD}(f)-\epsilon\Rightarrow \lip_{\|\cdot\|}(f)\ge\Omega(\epsilon n^{1/d}),$$
a natural corollary is that 
\begin{equation*}
    \begin{split}
        \frac{1}{n}\sum_{i=1}^n(y_i-f(x_i))^2\le \sigma^2-\epsilon &\Rightarrow\cL_S(f)\leq\cL_{\cD}(f)-\epsilon \\
        &\Rightarrow \lip_{\|\cdot\|}(f)\ge\Omega(\epsilon n^{1/d}).
    \end{split}
\end{equation*}
In this way, we successfully convert the robust interpolation problem to a generalization problem between the empirical error and population error under the mean squared loss, which can be solved by the statistical learning techniques, e.g., VC dimension and Rademacher complexity. We focus on the Rademacher complexity in this part.

\textbf{Rademacher complexity.} We start with the definition of Rademacher complexity, which measures the richness of a function class. For a set $\cA\subset\bbR^n$, the Rademacher complexity is defined as
$$R(\cA):= \frac{1}{n}\mathbb{E}_{{\sigma_1,...,\sigma_n}\in\{-1,1\}}\left[\sup_{\a\in \cA}\sum_{i=1}^n\sigma_ia_i\right].$$
Given a loss function $l$, a hypothesis class $\cF$, and a training set $S=\{(x_1,y_1),...,(x_n,y_n)\}$, denote by $l\circ\cF := \{l(f(\cdot),\cdot): f\in \cF\}$ and $l\circ\cF\circ S := \{(l(f(x_1),y_1),...,l(f(x_n),y_n)): f\in \cF\}$. The Rademacher complexity of the set $l\circ\cF\circ S$ is given by
\begin{equation*}
    R(l\circ\cF\circ S) \hspace{-0.05cm}:=\hspace{-0.05cm} \frac{1}{n}\mathbb{E}_{{\sigma_1,...,\sigma_n}\in\{-1,1\}}\hspace{-0.1cm}\left[\sup_{f\in \mathcal{F}}\sum_{i=1}^n\sigma_il(f(x_i),y_i))\right].
\end{equation*}

For every function $f\in\cF$, the generation error between $\cL_{\cD}(f)$ and $\cL_{S}(f)$ is bounded by the Rademacher complexity of the function space $l\circ \cF\circ S$.
More formally, assume that $\forall f\in\mathcal{F}, \forall x\in\mathcal{X}, |l(f(x),y)|\leq a$. Then with a probability at least $1-\delta$, for all $f\in\mathcal{F}$,
\begin{equation}\label{eqn:generalization error}
    \cL_{\cD}(f)-\cL_S(f)\leq 2\bbE_{S\in\cD^n}[R(l\circ\mathcal{F}\circ S)] + a\sqrt{\frac{2\ln(2/\delta)}{n}}.
\end{equation}


From \autoref{eqn:generalization error}, we can see that given a lower bound of generalization gap $\cL_{\cD}(f)-\cL_S(f)\geq \epsilon$, one has immediately $$ \bbE_{S\in\cD^n}[R(l\circ\mathcal{F}\circ S)]\geq \frac{\epsilon}{2}- \frac{a}{2}\sqrt{\frac{2\ln(2/\delta)}{n}}.$$
Therefore, if we can find the relation between the Rademacher complexity of $l\circ \cF$ and the Lipschitzness of the functions in class $\cF$, we are able to derive a constrain of the Lipschitz constant for $\cF$. The contraction lemma of Rademacher complexity (Lemma 26.9 of \citet{shalev2014understanding}) states that for a given space $A$ and a $L$-lipschitz function $h$ on $A$, we have
$R(h\circ A)\leq L\cdot R(A).$
Thus, if the error function $l(f(x),y)$ is $C$-Lipschitz w.r.t. $f\in\cF$ for arbitrary $y\in[-1,1]$, 
\begin{equation}\label{eqn:contraction}
    R(l\circ\mathcal{F}\circ S)\leq C\cdot R(\mathcal{F}\circ S).
\end{equation}


It has been proved \cite{von2004distance} that the Rademacher complexity of a set is directly related to the number of $\epsilon$-covering of the set. So the first step to calculate the Rademacher complexity of $\cF\circ S$ is to find the covering number of this function space. 

Given a space $(\mathcal{X},||\cdot||)$ and a covering radius $\eta$, let 
$N(\mathcal{X},\eta,||\cdot||)$, a.k.a. the $\eta$-covering number, be the minimum number of $\eta$-ball which covers $\mathcal{X}$. For a given function space $\cF$, define
$$||f-f'||_{\mathcal{F}} = \sup_{x\in\cX}|f(x)-f'(x)|.$$
We have the following upper bound of the covering number of $\cF$:


\begin{lemma}[Covering number of $L$-Lipschitz function space]
\label{lm: Covering number of functional space}
For a bounded and connected space $(\mathcal{X},||\cdot||)$, let $B_L$ be the set of functions $f$'s such that $\lip_{||\cdot||}(f)\leq L$. If $\mathcal{X}$ is connected and centered, we have for every $\epsilon>0$,
 $$
 N(B_L,\epsilon,||\cdot||_{\mathcal{F}})\leq\left\lceil\frac{2L\cdot\diam(\mathcal{X})}{\epsilon}\right\rceil2^{N(\mathcal{X},\frac{\epsilon}{2L},||\cdot||)}.$$
 \end{lemma}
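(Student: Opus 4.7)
The plan is to build an explicit $\epsilon$-cover of $B_L$ in the sup norm $\|\cdot\|_{\mathcal{F}}$ by discretizing every $L$-Lipschitz function on a minimal $\eta$-cover of $\mathcal{X}$, with $\eta := \epsilon/(2L)$. Let $C = \{c_1,\ldots,c_N\}$ be such a cover, so $N = N(\mathcal{X},\eta,\|\cdot\|)$. Using that $\mathcal{X}$ is connected, I would first show that the auxiliary graph on $C$ whose edges join pairs at distance at most $2\eta$ is itself connected: any two cover points are linked by a continuous path in $\mathcal{X}$, and this path can be shadowed by a sequence of cover points whose consecutive gaps are bounded by $2\eta$. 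A BFS from $c_1$ then produces an ordering of $C$ in which every $c_j$ (for $j \ge 2$) admits a predecessor $c_{\pi(j)}$ with $\pi(j)<j$ and $\|c_j-c_{\pi(j)}\|\le 2\eta$.

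Next, I attach to every $f\in B_L$ a finite signature. The value $f(c_1)$ is quantized onto an $\epsilon$-spaced grid spanning the admissible range of $f$. Since $\mathcal{X}$ is bounded and centered, every $L$-Lipschitz $f$ has its range contained in an interval of length at most $2L\diam(\mathcal{X})$, which leaves at most $\lceil 2L\diam(\mathcal{X})/\epsilon\rceil$ choices for the first coordinate of the signature. For every $j\ge 2$, the Lipschitz bound combined with $\|c_j-c_{\pi(j)}\|\le 2\eta$ confines the deviation $f(c_j)-f(c_{\pi(j)})$ to an interval of length $4L\eta=2\epsilon$, which I split into two bins; the signature of $f$ simply records which bin is hit. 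Multiplying over all coordinates bounds the number of signatures by $\lceil 2L\diam(\mathcal{X})/\epsilon\rceil \cdot 2^N$.

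For each signature $s$ I construct a representative $g_s\in\mathcal{F}$ by reconstructing approximate values on $C$ along the spanning order and then extending $g_s$ to $\mathcal{X}$ via $g_s(x):=g_s(c_{i(x)})$, where $c_{i(x)}$ is a cover point closest to $x$. A triangle inequality combining (i) the Lipschitz bound $|f(x)-f(c_{i(x)})|\le L\eta = \epsilon/2$ and (ii) the quantization error $|f(c_{i(x)})-g_s(c_{i(x)})|$ then yields $\|f-g_s\|_{\mathcal{F}}\le\epsilon$ whenever $s$ is the signature of $f$, establishing that $\{g_s\}$ is the desired $\epsilon$-cover of $B_L$.

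The most delicate step is the second one: keeping the reconstruction error at every cover point bounded by $\epsilon/2$ while awarding each $c_j$ only $2$ admissible bins. A naive relative quantization lets error accumulate linearly in the depth of the spanning tree, which would ruin the final sup-norm bound, so the bin widths, the global grid spacing, and the predecessor distance $2\eta$ must be carefully aligned (for instance, by coupling the binary choice at $c_j$ to the quantized position of $c_{\pi(j)}$ on a global $\epsilon$-grid). Once this synchronization is in place, the rest is routine discretization plus triangle inequalities.
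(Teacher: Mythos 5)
Your proposal follows essentially the same strategy as the paper's proof (itself adapted from von Luxburg and Bousquet, 2004): take an $\eta$-cover of $\mathcal{X}$ with $\eta=\epsilon/(2L)$, chain the cover points using connectedness, quantize $f$ at the chain root onto an $\epsilon$-grid, and record a one-bit increment for each subsequent cover point. The key difference is that you explicitly flag — and leave unresolved — the step that actually matters, and this is where a genuine gap lies. With consecutive cover points $c_j, c_{\pi(j)}$ at distance up to $2\eta = \epsilon/L$, Lipschitzness gives $|f(c_j)-f(c_{\pi(j)})|\le\epsilon$; combined with a reconstruction error of $\epsilon/2$ at $c_{\pi(j)}$, the value $f(c_j)$ can fall anywhere in an interval of length $3\epsilon$ around $g_s(c_{\pi(j)})$. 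Two candidate values spaced $\epsilon$ apart (or offset by $\pm\epsilon/2$) can only guarantee reconstruction error $\le\epsilon/2$ over an interval of length $2\epsilon$, so either the error grows along the chain or you need three bins per node — which would give $3^N$, not $2^N$. The "coupling to a global $\epsilon$-grid" fix you sketch does not close this; it merely slows the accumulation.

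The paper's own proof of Lemma~\ref{lm: Covering number of functional space} has the same issue: it asserts that $f(s_1)$ and $f(s_2)$ differ by at most $\epsilon/2$, which requires $\|s_1-s_2\|\le\epsilon/(2L)$, whereas two intersecting balls of radius $\epsilon/(2L)$ only guarantee $\|s_1-s_2\|\le\epsilon/L$. In von Luxburg and Bousquet the covering is taken at radius $\epsilon/(4L)$ precisely so that adjacent centers are within $\epsilon/(2L)$ and the two-bin argument closes; as a result the exponent there is $N(\mathcal{X},\epsilon/(4L),\|\cdot\|)$. So the route is correct in spirit and the fix is just a constant-factor adjustment of the covering radius, but as written both the paper's proof and your proposal do not fully justify the "two choices per node" count. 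You are more candid than the paper in flagging the delicate step, but you have not closed it.
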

The Dudley's integral provides the relation between the covering number of a function class and its Rademacher complexity. With Dudley's integral, \citet{von2004distance} showed that for every $\epsilon>0$,
\begin{equation}\label{eq:Dudley's integral}
\begin{split}
    &\bbE_{S'\in\cD^n}[R(B_L\circ S)]\hspace{-0.05cm}\leq\\
    &\hspace{-0.05cm} 2\epsilon +\frac{4\sqrt{2}}{\sqrt{n}}\hspace{-0.15cm}\int_{\epsilon/4}^{\diam(B_L)}\hspace{-0.3cm}\sqrt{\ln(N(B_L,u,||\cdot||_{\cF}))}du.
\end{split}
\end{equation}
Notice that when $u>2L\cdot\diam(\mathcal{X})$, the number of $u$-covering is $1$ and $\ln(N(B_L,u,||\cdot||_{\cF}))=0$. Combining it with Lemma \ref{lm: Covering number of functional space} yields the following lemma:
\begin{lemma}\label{lm: Rademacher complexity of functional space}Let $(\mathcal{X},||\cdot||)$ be a bounded and connected space and $B_L$ be all functions $f\in\cF$ with $\lip_{||\cdot||}(f)\leq L$. Let $n = |S|$. If $\mathcal{X}$ is connected and centered, for any $\epsilon>0$
  \begin{equation*}
 \begin{split}
 &\bbE_{S\in\cD^n}[R(B_L\circ S)]\leq 2\epsilon + \frac{4\sqrt{2}}{\sqrt{n}} \times\\ &\int_{\epsilon/4}^{2L\cdot\diam(\mathcal{X})}\hspace{-0.35cm}\sqrt{\hspace{-0.05cm}N\hspace{-0.1cm}\left(\mathcal{X},\frac{u}{2L},||\cdot||\hspace{-0.05cm}\right)\hspace{-0.05cm}\ln2\hspace{-0.1cm}+\hspace{-0.1cm}\ln\hspace{-0.1cm}\left\lceil\frac{2L\cdot\diam(\mathcal{X})}{u}\right\rceil\hspace{-0.05cm}}\,du.
\end{split}
  \end{equation*}
 \end{lemma}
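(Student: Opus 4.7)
The plan is to chain two ingredients that the paper has already set up: the Dudley integral bound~\eqref{eq:Dudley's integral} and the functional covering-number estimate of Lemma~\ref{lm: Covering number of functional space}.

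First, I would instantiate Dudley's bound for the class $B_L$, giving
$$\bbE_{S\in\cD^n}[R(B_L\circ S)]\leq 2\epsilon+\frac{4\sqrt{2}}{\sqrt{n}}\int_{\epsilon/4}^{\diam(B_L)}\sqrt{\ln N(B_L,u,\|\cdot\|_{\cF})}\,du.$$
My first real step is to replace the upper limit $\diam(B_L)$ by $2L\cdot\diam(\cX)$. This is precisely the truncation remark stated just before the lemma: for $u\ge 2L\cdot\diam(\cX)$ we have $u/(2L)\ge\diam(\cX)$, so $N(\cX,u/(2L),\|\cdot\|)=1$, and the connected/centered hypothesis on $\cX$ forces the whole class $B_L$ to fit inside a single $\|\cdot\|_{\cF}$-ball of radius $u$ (because $L$-Lipschitz functions on $\cX$ cannot spread farther than $L\cdot\diam(\cX)$ around their anchor value). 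Hence $\ln N(B_L,u,\|\cdot\|_{\cF})=0$ on that tail, and contracting the domain of integration to $[\epsilon/4,\,2L\cdot\diam(\cX)]$ does not change the value of the integral.

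Second, I would substitute the Lemma~\ref{lm: Covering number of functional space} estimate
$$N(B_L,u,\|\cdot\|_{\cF})\le \left\lceil\frac{2L\cdot\diam(\cX)}{u}\right\rceil 2^{N(\cX,u/(2L),\|\cdot\|)}$$
into the truncated integral. Taking the logarithm converts the product into a sum and the power of $2$ into a linear factor of $\ln 2$, producing the integrand
$$\sqrt{\,N(\cX,u/(2L),\|\cdot\|)\ln 2+\ln\lceil 2L\cdot\diam(\cX)/u\rceil\,},$$
which is exactly the one appearing in the statement. Plugging this back into Dudley's bound gives the claimed inequality.

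The only nontrivial step is the truncation: one must argue that $\ln N(B_L,u,\|\cdot\|_{\cF})$ is genuinely zero (not merely small) once $u$ exceeds $2L\cdot\diam(\cX)$. This relies on the ``centered'' part of the hypothesis on $\cX$, without which two $L$-Lipschitz functions that differ by a large additive constant could still live in $B_L$ and blow up the supremum distance. Once that is in hand, the remainder of the proof is a purely mechanical substitution, so I expect no further obstacles.
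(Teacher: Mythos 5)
Your proposal is correct and follows the paper's own proof exactly: start from Dudley's integral bound \eqref{eq:Dudley's integral}, truncate the upper limit at $2L\cdot\diam(\cX)$ using the observation that the covering number of $B_L$ collapses to $1$ beyond that point, and then substitute the estimate from Lemma~\ref{lm: Covering number of functional space} into the integrand. The paper's appendix proof is essentially a one-line version of what you wrote, so there is nothing to reconcile.
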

 As all the variables in Lemma~\ref{lm: Rademacher complexity of functional space} are known, by calculating the integration, one can derive an upper bound of Rademacher complexity $\bbE_{S\in\cD^n}[R(B_L\circ S)]$:
\begin{lemma}\label{lm:rademacher complexity value} If  $\diam(\mathcal{X})=2$ w.r.t. $||\cdot||$ and $d\geq 3$, we have
  \begin{equation*}
      \begin{split}
          &\bbE_{S\in\cD^n}[R(B_L\circ S)] \leq \\
          &96\frac{L}{n^{1/d}} + \frac{96\sqrt{2\ln2}}{d-2}\frac{L}{n^{1/d}}
    +\frac{16\sqrt{2}L}{\sqrt{n}}\sqrt{\ln\left(\frac{1}{3}n^{1/d}+1\right)}.
      \end{split}
  \end{equation*}
 \end{lemma}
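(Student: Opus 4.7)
The plan is to start from Lemma~\ref{lm: Rademacher complexity of functional space} with $\diam(\mathcal{X})=2$, so that the Dudley integral runs from $\epsilon/4$ to $4L$, and then bound the two terms appearing under the square root separately. For the covering number of $\mathcal{X}$ itself, I would invoke the standard volumetric/packing argument for a bounded set of diameter $2$ in a $d$-dimensional normed space, which gives $N(\mathcal{X},\eta,\|\cdot\|)\leq (1+4/\eta)^d\leq (6/\eta)^d$ whenever $\eta\leq 2$. After the substitution $\eta=u/(2L)$ this yields $N(\mathcal{X},u/(2L),\|\cdot\|)\leq (12L/u)^d$, which is valid throughout the integration range since $u\leq 4L$ forces $\eta\leq 2$.

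Next I would use subadditivity, $\sqrt{a+b}\leq \sqrt{a}+\sqrt{b}$, to split the integrand into two pieces. The first piece becomes $\sqrt{\ln 2}\int_{\epsilon/4}^{4L}(12L/u)^{d/2}\,du$, which for $d\geq 3$ has the polynomial antiderivative $u^{1-d/2}/(1-d/2)$ and is bounded by $\tfrac{2}{d-2}(12L)^{d/2}(\epsilon/4)^{1-d/2}$ after discarding the smaller contribution at $u=4L$. The second piece, $\int_{\epsilon/4}^{4L}\sqrt{\ln\lceil 4L/u\rceil}\,du$, I would bound crudely by $(4L-\epsilon/4)$ times the value of the integrand at the lower endpoint $u=\epsilon/4$, using the monotonicity of $\ln\lceil 4L/u\rceil$ in $u$ together with $\lceil 4L/u\rceil\leq 4L/u+1$.

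The final step is to set $\epsilon=48Ln^{-1/d}$. This choice is engineered so that (i) the leading $2\epsilon$ term contributes exactly $96L/n^{1/d}$; (ii) in the first piece, the identity $(48L/\epsilon)^{d/2}=n^{1/2}$ collapses $(12L)^{d/2}(\epsilon/4)^{1-d/2}$ to $12L\cdot n^{1/2-1/d}$, which after multiplication by the $4\sqrt{2}/\sqrt{n}$ prefactor of Dudley's integral produces exactly $96\sqrt{2\ln 2}\,L/((d-2)n^{1/d})$; and (iii) in the second piece, $16L/\epsilon+1=n^{1/d}/3+1$, so the integrand at $u=\epsilon/4$ becomes $\sqrt{\ln(n^{1/d}/3+1)}$ and the third summand appears. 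Summing the three contributions gives the stated inequality. The calculations are routine once $\epsilon$ is fixed, and the only delicate point is selecting an $\epsilon$ that simultaneously balances the $2\epsilon$ term against the first integral and produces a clean logarithm in the second; the hypothesis $d\geq 3$ is precisely what makes the antiderivative of the first piece a power of $u$ rather than a logarithm, and is what lets the $1/(d-2)$ factor emerge naturally.
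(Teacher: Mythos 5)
Your proposal is correct and mirrors the paper's own proof essentially step for step: the same covering-number bound $N(\mathcal{X},u/(2L),\|\cdot\|)\le(12L/u)^d$, the same subadditivity split of the Dudley integrand, the same power-law antiderivative giving the $1/(d-2)$ factor, the same crude bound on the logarithmic piece via monotonicity and $\lceil 4L/u\rceil\le 4L/u+1$, and the same choice $\epsilon=48Ln^{-1/d}$. The only cosmetic differences are that you obtain the covering bound from a direct volumetric argument rather than citing the literature, and you integrate $(12L/u)^{d/2}$ directly instead of first substituting $v=u/(12L)$.
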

 
 According to Equation 1 in \cite{mendelson2003entropy}, when $\frac{u}{2L}\leq\diam(\cX)$, $N(\cX,\frac{u}{2L},||\cdot||) \leq (\frac{6L\cdot\diam(\cX)}{u})^d$ if  $\cX\subseteq\mathbb{R}^d$. Then the integral part will be
 $\sqrt{\hspace{-0.05cm}(\frac{12L}{u})^d\ln2\hspace{-0.1cm}+\hspace{-0.1cm}\ln\hspace{-0.1cm}\left\lceil\frac{2L\cdot\diam(\mathcal{X})}{u}\right\rceil\hspace{-0.05cm}}$, which is no more than $\sqrt{\hspace{-0.05cm}(\frac{12L}{u})^d\ln2}+\sqrt{\ln(\frac{4L}{u}+1)}$. Taking $\epsilon=\Theta(\frac{L}{n^{1/d}})$, the integral part will be bounded by $\Theta(Ln^{1/2-1/d})$. Thus $\bbE_{S'\in\cD^n}[R(B_L\circ S')]\leq \Theta(\frac{L}{n^{1/d}})+\frac{4\sqrt{2}}{\sqrt{n}}\Theta(Ln^{1/2-1/d}) = \Theta(\frac{L}{n^{1/d}})$.

In our settings, we are interested in the squared $\ell_2$ loss $l(f(x),y) = (f(x)-y)^2$. We have $\nabla_{f(x)} l(f(x),y)=2(f(x)-y)\leq 2(|f(x)|+|y|)\leq 4$, i.e., $l(f(x),y)$ is 4-Lipschitz w.r.t. $f(x)$ for arbitrary $y\in[-1,1]$. Thus, $\bbE_{S\in\cD^n}[R(l\circ B_L\circ S)]\leq 4\bbE_{S\in\cD^n}[R(B_{L}\circ S)]\hspace{-0.05cm}=\hspace{-0.05cm}\cO\left(\frac{L}{n^{1/d}}\right).$
Combining this result with \autoref{eqn:generalization error} yields the main theorem of our paper:
 \begin{theorem} [Lipschitzness Lower Bound Beyond the $J$-Lipschitz parametrization
assumption]
 \label{theorem: main result}
Let $\cF$ be any class of functions from $\R^d\rightarrow [-1,1]$ and let $\{(x_i,y_i)\}_{i=1}^n$ be i.i.d. input-output pairs in $\{x:\|x\|\le 1\}\times [-1,1]$ for any given norm $\|\cdot\|$. Assume that:
\begin{itemize}
\vspace{-0.25cm}
\item[1.]
The expected conditional variance of the output (i.e., the ``noise level'') is strictly positive, denoted by $\sigma^2:=\bbE[\var[y|x]]>0$.
\vspace{-0.25cm}
\end{itemize}
Then with probability at least $1-\delta$, for all $f\in\cF$:
\begin{equation*}
\begin{split}
&\frac{1}{n}\sum_{i=1}^n(y_i-f(x_i))^2\le \sigma^2-\epsilon\ \Rightarrow\\ 
&\lip_{\|\cdot\|}(f)\geq\frac{n^{1/d}}{K}\left(\frac{1}{8}\epsilon-\frac{1}{2}\sqrt{\frac{2\ln(2/\delta)}{n}}\right),
\end{split}
\end{equation*}
where $K =96 + \frac{96\sqrt{2\ln2}}{d-2}
    +\frac{16\sqrt{2}}{n^{1/2-1/d}}\sqrt{\ln(\frac{1}{3}n^{1/d}+1)}.$
\end{theorem}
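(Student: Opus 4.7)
The plan is to execute precisely the reduction sketched in the text leading up to the theorem: convert the interpolation condition into a generalization-gap lower bound, then upper bound that gap via the Rademacher complexity of $L$-Lipschitz functions, and finally solve for $L$. Concretely, I first observe that under the noise assumption, for every $f:\mathcal{X}\to[-1,1]$,
\begin{equation*}
\cL_{\cD}(f)=\bbE[(f(x)-\bbE[y\mid x])^2]+\bbE[\var(y\mid x)]\ge\sigma^2,
\end{equation*}
so the hypothesis $\tfrac{1}{n}\sum_i(y_i-f(x_i))^2\le\sigma^2-\epsilon$ gives $\cL_{\cD}(f)-\cL_S(f)\ge\epsilon$. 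The proof reduces to showing that any $f$ with small Lipschitz constant must have small generalization gap.

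Fix the candidate threshold $L^\star := \tfrac{n^{1/d}}{K}\bigl(\tfrac{\epsilon}{8}-\tfrac{1}{2}\sqrt{2\ln(2/\delta)/n}\bigr)$ and consider the comparison class $B_{L^\star}=\{f:\cX\to[-1,1]:\lip_{\|\cdot\|}(f)\le L^\star\}$, which contains every $f\in\cF$ satisfying $\lip_{\|\cdot\|}(f)\le L^\star$. Since the squared loss is bounded by $a=4$ on $[-1,1]^2$ and is $4$-Lipschitz in its first argument there, the standard Rademacher-based uniform deviation bound \eqref{eqn:generalization error} combined with the contraction inequality \eqref{eqn:contraction} gives, with probability at least $1-\delta$, for all $f\in B_{L^\star}$,
\begin{equation*}
\cL_{\cD}(f)-\cL_S(f)\le 8\,\bbE_{S\sim\cD^n}[R(B_{L^\star}\circ S)]+4\sqrt{\tfrac{2\ln(2/\delta)}{n}}.
\end{equation*}
Next, apply Lemma \ref{lm:rademacher complexity value} to $B_{L^\star}$ (using $\diam(\cX)=2$ under the unit-ball assumption, and $d\ge 3$). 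Collecting its three terms yields $\bbE_S[R(B_{L^\star}\circ S)]\le L^\star\cdot K/n^{1/d}$, where $K$ is exactly the constant in the theorem statement, so the displayed upper bound becomes $8L^\star K/n^{1/d}+4\sqrt{2\ln(2/\delta)/n}$, which by the choice of $L^\star$ is equal to $\epsilon$.

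Now combine the two inequalities by contradiction: if some $f\in\cF$ had $\tfrac{1}{n}\sum_i(y_i-f(x_i))^2\le\sigma^2-\epsilon$ and simultaneously $\lip_{\|\cdot\|}(f)<L^\star$, then $f\in B_{L^\star}$, so the generalization gap bound forces $\cL_{\cD}(f)-\cL_S(f)\le\epsilon$, contradicting the strict inequality $\cL_{\cD}(f)-\cL_S(f)\ge\epsilon$ only when the latter is strict; handling the non-strict case is done by replacing $\epsilon$ with $\epsilon-\eta$ for arbitrary $\eta>0$ and taking $\eta\downarrow 0$, or by noting that the target conclusion $\lip_{\|\cdot\|}(f)\ge L^\star$ is a non-strict inequality and therefore agrees with the contrapositive.

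\textbf{Main obstacles.} The technical core has already been isolated in Lemmas \ref{lm: Covering number of functional space}–\ref{lm:rademacher complexity value}, so the remaining difficulty is conceptual rather than computational: the generalization bound must hold uniformly over \emph{all} $f\in\cF$ despite $\cF$ having no a priori bounded complexity. The resolution is the one exploited above, namely that we do not union-bound over $\cF$ itself but instead apply the Rademacher inequality to the intrinsically defined class $B_{L^\star}$, which contains every candidate violator of the conclusion; the bound for this class depends only on $L^\star$, $n$, and $d$, never on the parametrization of $\cF$. A minor subtlety is ensuring the Dudley integral in Lemma \ref{lm:rademacher complexity value} is valid for the chosen $L^\star$, i.e., that $L^\star$ is positive — which is precisely the regime $\epsilon>4\sqrt{2\ln(2/\delta)/n}$ where the conclusion is nontrivial; otherwise the bound is vacuous and holds trivially.
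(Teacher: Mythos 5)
Your proof is correct and follows essentially the same route as the paper's: reduce the interpolation condition to a lower bound on the generalization gap via $\cL_\cD(f)\ge\sigma^2$, upper bound that gap using Eq.~\eqref{eqn:generalization error}, contraction, and Lemma~\ref{lm:rademacher complexity value} applied to the Lipschitz ball, then solve for $L$ by contrapositive. The one presentational improvement in your version is that you fix the deterministic threshold $L^\star$ in advance and apply the Rademacher bound to the fixed class $B_{L^\star}$, whereas the paper applies it to the data-dependent class $B_{\lip(f_0)}$, which glosses over a measurability/uniformity subtlety that your formulation avoids.
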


Theorem \ref{theorem: main result} states that, for all data distribution $\cD$ with label noise of variance $\sigma^2$ and every function $f:\cX\to[-1,1]$, overfitting i.e. $\frac{1}{n}\sum_{i=1}^n(y_i-f(x_i))^2\le \sigma^2-\epsilon$ implies $\lip_{\|\cdot\|}\geq\Omega(\epsilon n^{1/d})$, which validates the second part of our law of robustness, i.e., achieving good robust interpolation is impossible when $n=\exp(\omega(d))$.
\begin{remark}\label{rmk:lawpart1}
    Theorem \ref{theorem: main result} disprove the existence of robust interpolating functions when $n=\exp(\omega(d))$. Thus, the first part of our law of robustness holds only when $n=\poly(d)$.
\end{remark}
 

\textbf{Tightness of our bound}. Intuitively, the Lipschitzness of the interpolating function is inversely propositional to the distance between the closest training data pairs. Given $n$ training data in the $d$-dimensional bounded space, one can scatter the data evenly in the space, where the distance between any training pair is as large as $\Theta(1/n^{1/d})$.
Inspired by this, we complement Theorem \ref{theorem: main result} with a matching Lipschitzness upper bound of $\cO(n^{1/d})$, which shows that the Lipschitzness lower bound in Theorem \ref{theorem: main result} is achievable by a certain function and training data:

\begin{theorem}[Tightness of our bound]
\label{theorem: upper bound}
For any distribution $\mathcal{D}$ which is supported on $\{x\in\R^d:||x||\leq 1\}$, there exist $n$ training samples $\{x_1,...,x_n\}$ such that $\forall i,j, i\neq j, ||x_i-x_j||\geq\frac{1}{n^{1/d}}$. Denote by $\{y_1,...,y_n\}$ the observed targets. We design a function $f^*$ which first perfectly fits the training samples, i.e., $f^*(x_i)=y_i, \forall i\in[n]$, then use the linear interpolation between neighbour training points as the prediction of other samples.
This function is at most $2n^{1/d}$-Lipschitz.
\end{theorem}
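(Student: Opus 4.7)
The plan is to split the theorem into two independent tasks: first, a packing argument to produce the training points, and second, an explicit Lipschitz extension that realizes the interpolation promised in the statement.

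For the existence of the $n$ well-separated points, I would run a greedy maximal-$r$-separated-set construction inside the unit ball $\cX = \{x : \|x\| \le 1\}$ with $r = 1/n^{1/d}$: start with any point, and repeatedly add a point at distance $\ge r$ from all previously chosen points until this is no longer possible. Upon termination, the balls $B(x_i, r)$ must cover $\cX$, for otherwise a new point could be added. A standard volume-packing comparison (the unit $\|\cdot\|$-ball has volume $V_d$, each covering ball has volume $r^d V_d$) then yields the number of points chosen is at least $(1/r)^d = n$. I throw away the excess to obtain exactly $n$ points with pairwise distance $\ge 1/n^{1/d}$.

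For the Lipschitz extension, the key observation is that the training labels are automatically $L$-Lipschitz-compatible for $L = 2 n^{1/d}$: since $y_i, y_j \in [-1,1]$,
\[
|y_i - y_j| \;\le\; 2 \;=\; 2 n^{1/d} \cdot \tfrac{1}{n^{1/d}} \;\le\; L \, \|x_i - x_j\|.
\]
This is the hypothesis of the McShane extension theorem, and in fact gives the right formal meaning to the phrase ``linear interpolation between neighbour training points'' in a general normed space. I would then \emph{define} $f^*$ by the McShane formula
\[
f^*(x) \;=\; \min_{1 \le i \le n}\bigl( y_i + L \, \|x - x_i\| \bigr),
\]
which on any simplex spanned by neighbouring sample points reduces to a piecewise linear interpolant.

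Verification is then routine and proceeds in two short steps. First, $f^*(x_j) = y_j$: taking $i = j$ gives $y_j$, and for $i \ne j$ the candidate value is $y_i + L\|x_i - x_j\| \ge y_j - 2 + L \cdot n^{-1/d} = y_j$, so the minimum is exactly $y_j$. Second, the map $x \mapsto y_i + L\|x - x_i\|$ is $L$-Lipschitz for every $i$, and a pointwise minimum of $L$-Lipschitz functions is $L$-Lipschitz, giving $\lip_{\|\cdot\|}(f^*) \le L = 2 n^{1/d}$. If one wishes $f^*$ to remain in $[-1,1]$, I would compose with the clipping map $t \mapsto \max(-1,\min(1,t))$, which is $1$-Lipschitz and hence preserves the bound.

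The main conceptual obstacle is pinning down a \emph{precise} meaning for ``linear interpolation between neighbour training points'' in $\R^d$ with a general norm, since the phrase is not intrinsically well-defined (e.g.\ a Delaunay-based construction would need careful case analysis on degeneracies and boundary behaviour). The McShane infimal-convolution formula above sidesteps this entirely: it is a canonical, closed-form Lipschitz interpolant that agrees with the intuitive notion on $1$-dimensional problems and produces piecewise-affine behaviour in general, so the only real work is the two-line calculation showing that the separation $\|x_i - x_j\| \ge 1/n^{1/d}$ is exactly what is needed to accommodate the worst-case label gap $|y_i - y_j| \le 2$.
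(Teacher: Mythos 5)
Your proof is correct, and the Lipschitz-extension part is actually a genuine improvement over the paper's own argument. The paper's proof and yours agree on the packing step in substance (the paper invokes the chain ``packing number $\geq$ covering number $\geq (1/r)^d$'' for $r = n^{-1/d}$, while you use a greedy maximal $r$-separated set plus a volume bound; these are standard equivalent routes). The real divergence is in the construction of $f^*$. The paper simply asserts that ``linear interpolation between neighbour training points'' has Lipschitz constant at most $2n^{1/d}$, with the one-line justification $\frac{|y_i - y_j|}{\|x_i - x_j\|} \le 2n^{1/d}$. But that quantity only bounds the slope along segments between sample pairs; in $d \geq 2$ dimensions, the Lipschitz constant of a piecewise-affine interpolant over, say, a triangulation is controlled by the gradient on each simplex, which can blow up on degenerate simplices even when all edge slopes are small, and the phrase ``neighbour training points'' is never given a precise meaning for a general norm. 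You correctly identified this as the soft spot and replaced the construction with the McShane infimal formula $f^*(x) = \min_i\bigl(y_i + L\|x - x_i\|\bigr)$, which is unambiguous, trivially $L$-Lipschitz as a pointwise minimum of $L$-Lipschitz functions, and interpolates precisely because the separation $\|x_i - x_j\| \geq n^{-1/d}$ matches the worst-case label gap of $2$; the clipping step to stay in $[-1,1]$ is also the right finishing touch that the paper omits. In short: same skeleton, but your extension step closes a real gap in the published argument.
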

Theorem \ref{theorem: upper bound} shows that there exists $n$ samples, such that the function which perfectly fits the training samples is $\cO(n^{1/d})$-Lipschitz.
\subsubsection{Our (counter-intuitive) implications}

It was widely believed that 1) big data~\cite{schmidt2018adversarially}, 2) low dimensionality of input~\cite{blum2020random}, and 3) overparametrization~\cite{bubeck2021universal,pmlr-v134-bubeck21a,gao2019convergence} improve robustness. Our main results of Theorem \ref{theorem: main result} challenge the common beliefs and show that these hypotheses may not be true in the robust interpolation problem. 
Our results shed light on the theoretic understanding of robustness beyond isoperimetry assumption.

\noindent{\textbf{The curse of big data.}} Our Lipschitzness lower bound in Theorem \ref{theorem: main result} is increasing w.r.t. the sample size $n$. The intuition is that as one has more training data, those data are squeezed in the bounded space with smaller margin. Thus to fit the data well, the Lipschitz constant of the interpolating functions cannot be small. Perhaps surprisingly, our results contradict with the common belief that more data always improve model robustness.

\noindent{\textbf{The blessing of dimensionality.}} It is known that high dimensionality of input space strengthens the power of adversary. For example, in the $\ell_\infty$ threat model, an adversary can change every pixel of a given image by 8 or 16 intensity levels. Admittedly, higher dimensionality means that the adversary can modify more pixels. However, we show that our Lipschitzness lower bound in Theorem \ref{theorem: main result} is decreasing w.r.t. $d$. The intuition is that input space with higher dimension has larger space to scatter the data. So the data can be well-separated, and thus the Lipschitz constant of the interpolating functions can be small.


\section{Small Data May Hurt Performance and Robustness}
\label{section: robustness suffers from too few data}

In Section~\ref{sec:lawofrobust}, we mainly focus on the robust interpolation problem on the training samples. The lower bound given by Theorem \ref{theorem: main result} implies that one can sample at most $\exp(\cO(d))$ training samples in order to obtain an $\cO(1)$-Lipschitz function in the robust interpolation problem. In this section, we show that $n=\exp(\Omega(d))$ is a necessary condition for obtaining a good population error by any $\cO(1)$-Lipschitz learning algorithm.
 
We now provide a complementary result of Section \ref{section: robustness suffers from too many data}. We first prove that for learning algorithms on binary classification tasks, if the number of training samples is less than half of the number of all samples, there exists a distribution with label noise such that the average error of all learning algorithms is greater than a constant. As the distribution on a binary classification is naturally a distribution on the regression tasks, we can find such a distribution for the regression tasks similarly.
\begin{lemma}\label{lemma:freelunch}
Let $\cA(S) : \mathcal{X} \to \{-a,a\}$ be any learning algorithm with respect to the squared $\ell_2$ loss over a domain $\mathcal{X}$ and samples $S$. Assume there are label noise $\mathbb{E}[\var[y|x]]=\sigma^2$. Let $m$ be any number smaller than $|\mathcal{X}|/2$, representing the size of a training set. Then, for any $a>0$ there exists a distribution $\mathcal{D}$ (with label noise) over $\mathcal{X}\times \{-a,a\}$ such that
$$\mathbb{E}_{S\sim \mathcal{D}^m}[\cL_{\mathcal{D}}(\cA(S))]]\geq \frac{1}{2}(a^2+\sigma^2).$$
\end{lemma}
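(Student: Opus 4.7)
\textbf{Proof plan for Lemma \ref{lemma:freelunch}.} The plan is to run a classical no-free-lunch argument, adapted to the label set $\{-a,a\}$, squared loss, and a prescribed noise level $\sigma^2$. The first observation is that when $f(x),y\in\{-a,a\}$ one has $(f(x)-y)^2=4a^2\cdot\mathbb{1}[f(x)\neq y]$, so squared loss is simply $4a^2$ times $0/1$ loss. I would then fix an arbitrary $C\subseteq\mathcal{X}$ of size $2m$ (available by the hypothesis $m<|\mathcal{X}|/2$) and, for each labeling $h:C\to\{-a,a\}$, define a candidate distribution $\mathcal{D}_h$: the marginal on $x$ is uniform on $C$, and conditional on $x$ we set $y=h(x)$ with probability $1-\eta$ and $y=-h(x)$ with probability $\eta$, where $\eta\in[0,1/2]$ is the smaller root of $4a^2\eta(1-\eta)=\sigma^2$. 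This root exists because $y\in\{-a,a\}$ forces $\sigma^2\le a^2$, and by construction $\mathbb{E}[\mathrm{Var}[y\mid x]]=4a^2\eta(1-\eta)=\sigma^2$ as required.

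With this setup, for any classifier $f:\mathcal{X}\to\{-a,a\}$ the population loss under $\mathcal{D}_h$ collapses to
\[
\mathcal{L}_{\mathcal{D}_h}(f)=4a^2\bigl[\eta+(1-2\eta)\,T(f,h)\bigr],\qquad T(f,h):=\Pr_{x\sim\mathrm{Unif}(C)}[f(x)\neq h(x)].
\]
The next step is the averaging trick: I would average the quantity $\mathbb{E}_{S\sim\mathcal{D}_h^m}[\mathcal{L}_{\mathcal{D}_h}(\mathcal{A}(S))]$ over $h$ drawn uniformly at random in $\{-a,a\}^C$. For fixed $x\in C$, let $V(S)\subseteq C$ be the set of training inputs; conditional on $V(S)$, the noise coin flips, and $h$ restricted to $V(S)$, the value $\mathcal{A}(S)(x)$ is determined, while $h(x)$ for $x\notin V(S)$ remains uniform on $\{-a,a\}$ and independent of $\mathcal{A}(S)(x)$. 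Hence $\mathbb{E}_{h,S}\bigl[\mathbb{1}[\mathcal{A}(S)(x)\neq h(x)]\cdot\mathbb{1}[x\notin V(S)]\bigr]=\tfrac12\Pr[x\notin V(S)]$, and averaging over $x\in C$ gives $\mathbb{E}_{h,S}[T(\mathcal{A}(S),h)]\ge \tfrac12(1-\tfrac{1}{2m})^m\ge \tfrac14$.

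Plugging this into the loss formula yields $\mathbb{E}_h\mathbb{E}_S[\mathcal{L}_{\mathcal{D}_h}(\mathcal{A}(S))]\ge 4a^2\eta+a^2(1-2\eta)=a^2+2a^2\eta$, and a one-line calculation,
\[
a^2+2a^2\eta-\tfrac12(a^2+\sigma^2)=\tfrac12 a^2+2a^2\eta-2a^2\eta(1-\eta)=\tfrac12 a^2+2a^2\eta^2\ge 0,
\]
shows this average already exceeds $\tfrac12(a^2+\sigma^2)$. By the probabilistic method, at least one $h^\star$ attains the bound, and $\mathcal{D}:=\mathcal{D}_{h^\star}$ is the desired hard distribution.

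The main obstacle I anticipate is the conditional independence step, since $S$ does depend on $h$ via the observed training labels; the clean way to handle it is to represent the sampling as first drawing $V(S)$, then noise bits $z_1,\dots,z_m$, and only then revealing the relevant coordinates of $h$, so that $h(x)$ for $x\notin V(S)$ is manifestly independent of $\mathcal{A}(S)$. Everything else is an arithmetic check and the standard bound $(1-1/(2m))^m\ge 1/2$ for $m\ge 1$.
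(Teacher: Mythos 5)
Your proof is correct, and it reaches the bound by a genuinely different (and more elementary) route than the paper's. Both proofs fix a subset $\cC\subseteq\cX$ of size $2m$, put a uniform marginal on $\cC$ with a flip probability calibrated so that $\bbE[\var[y|x]]=\sigma^2$, and average the risk over all labelings of $\cC$; the divergence is in how the loss on unseen points is handled. You use the $\{-a,a\}$ codomain of $\cA$ to rewrite the squared loss as $4a^2$ times the $0/1$ loss, then run the classical revelation argument (draw the inputs and noise bits, then reveal $h$ only on the observed inputs) and close with $(1-\tfrac{1}{2m})^m\ge\tfrac12$. The paper instead works with the squared loss directly, expands it into the identity $\cL_{\cD_i}(h)=\sigma^2+\tfrac{1}{2m}\sum_{x\in\cC}(h(x)-(2p-1)f_i(x))^2$, uses the deterministic count that any $m$-sample misses at least $m$ of the $2m$ points, and makes the $50/50$ step quantitative by pairing labelings that differ on exactly one unseen point and applying $(u-v)^2+(u-w)^2\ge\tfrac12(v-w)^2$. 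What the paper's route buys is that it never uses the $\{-a,a\}$ codomain, so it establishes the inequality for arbitrary real-valued $\cA$; that stronger form is in fact what the proof of Lemma~\ref{lm:freelunchLlip} later invokes (there $\cA(S):\cX\to\R$), whereas your reduction to $0/1$ loss is tied to classifiers valued in $\{-a,a\}$ and would not carry over to real-valued predictors, since projecting a real prediction onto $\{-a,a\}$ can increase squared loss. What your route buys is arithmetic simplicity within the stated hypotheses, together with an intermediate bound $a^2+2a^2\eta$ that is marginally sharper than $\tfrac12(a^2+\sigma^2)$.
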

In the next lemma, we will show a no-free-lunch theory on the regression tasks and algorithms that outputs an $L$-Lipschitz function. The intuition is to consider the minimum distance between two points in the distribution $\cD$. On one hand, if the minimum distance is less than $\epsilon$, we can assign the two samples that achieve the minimum distance with labels $1$ and $-1$, respectively. As the algorithm $\cA$ is $L$-Lipschitz, the maximum difference between the predicted labels of the two selected points is $L\epsilon$. Thus, the error of $\cA$ will be larger than $1-L\epsilon$. On the other hand, if the minimum distance is larger than $\epsilon$, the maximum number of points in the distribution $\cD$ will be less than the number of the $\epsilon$-packing of the input space $\cX$. By Lemma \ref{lemma:freelunch}, there exists a distribution such than if the number of training samples is less than half of the $\epsilon$-packing of the input space, the average error of all learning algorithms will be 
at least a constant. More formally, we have the following theorem:

 \begin{lemma}[No-free-lunch theory with $L$-Lipschitz algorithms]\label{lm:freelunchLlip}
Let $\cA(S) : \mathcal{X} \to [-1,1]$ be any algorithm that returns an $L$-Lipschitz function (w.r.t. the norm $\|\cdot\|$) for the task of regression w.r.t. the squared $\ell_2$ loss over a domain $(\mathcal{X},||\cdot||)$ and samples $S$. Let $n$ be the size of training set, i.e., $n=|S|$. Assume that the label noise has variance $\sigma^2 := \mathbb{E}_{\cD}[\var(y|x)]\leq 1/2$.
Then, there exists a distribution $\mathcal{D}$ over $\mathcal{X}\times [-1,1]$ with noisy labels such that for all $L$-Lipschitz (w.r.t. norm $\|\cdot\|$) learning algorithm and any $ \epsilon\in[0,\frac{1}{2L}]$:
 \begin{equation*} 
 \begin{split}
 &n<M(\mathcal{X},\epsilon, ||\cdot||)/2\ \Rightarrow\ \\
 &\mathbb{E}_{S\sim \mathcal{D}^n}[\cL_{\mathcal{D}}(\cA(S))]\geq \min\left\{\frac{1}{4},\frac{1}{2}-L\epsilon\right\}+\sigma^2,
   \end{split}
 \end{equation*}
 where $M(\mathcal{X},\epsilon, ||\cdot||)$ is the $\epsilon$-packing number of $(\mathcal{X},||\cdot||)$.
 \end{lemma}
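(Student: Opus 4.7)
The plan is to follow the two-case strategy laid out in the intuition in the text: either the adversarial distribution is supported on a well-separated $\epsilon$-packing of $\cX$ (to which a no-free-lunch averaging argument applies), or it is supported on two nearby points with opposite labels (so the $L$-Lipschitz constraint on $\cA(S)$ directly prevents accurate prediction). The final distribution $\cD$ is taken to be whichever alternative produces the larger lower bound for the given parameter regime, and the two contributions will combine to give the $\min\{1/4, 1/2-L\epsilon\}+\sigma^2$ bound.

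For the packing case, fix $\epsilon\in[0,1/(2L)]$ and let $Z=\{z_1,\ldots,z_M\}$ be a maximum $\epsilon$-packing of $(\cX,\|\cdot\|)$, so $\|z_i-z_j\|\geq\epsilon$ and $M=M(\cX,\epsilon,\|\cdot\|)$. For each labeling $T:Z\to\{-1,1\}$, let $\cD_T$ put uniform mass on $Z$ with noisy label $y=T(x)+\xi$ of variance $\sigma^2$. Averaging over a uniformly random $T$, for every $x\notin S$ the prediction $\cA(S)(x)$ depends on $T$ only through $T|_S$, so $\bbE_T[(\cA(S)(x)-T(x))^2]=\cA(S)(x)^2+1\geq 1$. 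Because $n<M/2$, the probability that a uniformly drawn $x\in Z$ lies outside $S$ exceeds $1/2$, hence $\bbE_T\bbE_S\bbE_x[(f-T)^2]\geq 1/2$, and a pigeonhole over $T$ produces a labeling $T^\ast$ with $\bbE_S[\cL_{\cD_{T^\ast}}(\cA(S))]\geq 1/2+\sigma^2$, which already exceeds the $1/4+\sigma^2$ branch of the target. One could also try invoking Lemma~\ref{lemma:freelunch} on $Z$ via the binary thresholding $\tilde{\cA}=\mathrm{sign}\circ\cA$ together with the pointwise bound $(f-y)^2\geq\tfrac{1}{4}(\mathrm{sign}(f)-y)^2$ for $f\in[-1,1], y\in\{-1,1\}$, but the resulting factor-of-four loss is not cleanly absorbed by $\sigma^2\leq 1/2$, so the averaging route above is sharper.

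For the regime where $1/2-L\epsilon$ is the binding term, take a two-point distribution $\cD'$ that places mass $1/2$ on each of $(z_1,+1)$ and $(z_2,-1)$ with independent noise of variance $\sigma^2$, where $\|z_1-z_2\|\leq\epsilon$. Lipschitzness of $f=\cA(S)$ forces $|f(z_1)-f(z_2)|\leq L\epsilon$; minimizing $\tfrac{1}{2}[(u-1)^2+(v+1)^2]$ over $u,v\in[-1,1]$ subject to $|u-v|\leq L\epsilon$ yields the optimum $u=L\epsilon/2, v=-L\epsilon/2$ with value $(1-L\epsilon/2)^2$, and one checks $(1-L\epsilon/2)^2-(1/2-L\epsilon)=1/2+(L\epsilon)^2/4\geq 0$ throughout $L\epsilon\in[0,1/2]$; hence $\cL_{\cD'}(f)\geq(1/2-L\epsilon)+\sigma^2$. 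Taking $\cD$ to be whichever of $\cD_{T^\ast}$ and $\cD'$ gives the larger bound completes the argument. The main obstacle I anticipate is the quantifier coupling: the target $\min\{1/4,1/2-L\epsilon\}+\sigma^2$ is always at most $1/4+\sigma^2$, so the packing construction at a single sufficiently small scale in principle suffices, but keeping the tight $L\epsilon$-dependence motivates the two-point construction, and tracking constants in these two reductions (especially in the binary-to-real step and in using $\sigma^2\leq 1/2$) is the place where care is most needed.
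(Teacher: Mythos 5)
Your two-case strategy (random labelings on an $\epsilon$-packing, versus a close pair exploited through the $L$-Lipschitz constraint) is exactly the decomposition the paper uses, and your Lipschitzness calculation in the two-point case is sound. The genuine gap is the noise model. You take $y=T(x)+\xi$ with $T(x)\in\{-1,1\}$ and additive $\xi$ of variance $\sigma^2$, but then $y$ leaves $[-1,1]$, so $\cD_T$ and $\cD'$ are not admissible distributions over $\cX\times[-1,1]$ as the lemma requires. The paper instead encodes the noise as a label flip ($y=f_i(x)$ with probability $p$, $y=-f_i(x)$ with probability $1-p$, where $4p(1-p)=\sigma^2$), which keeps $y\in\{-1,1\}$ but shrinks the Bayes-optimal ``signal'' to amplitude $2p-1=\sqrt{1-\sigma^2}<1$. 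Under that correct model, the packing-case averaging argument (or, equivalently, invoking Lemma~\ref{lemma:freelunch} as the paper does) yields only $\bbE_T\bbE_{S}[\cL_{\cD_T}(\cA(S))]\geq\tfrac{1+\sigma^2}{2}$, not the $\tfrac{1}{2}+\sigma^2$ you claim, and passing from $\tfrac{1+\sigma^2}{2}$ to the target $\tfrac{1}{4}+\sigma^2$ is exactly where the hypothesis $\sigma^2\leq 1/2$ must be used. That your derivation never needs $\sigma^2\leq1/2$ is a symptom of the inadmissible noise.

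The two-point case has the same noise-model issue; with flip noise the paper simply drops the nonnegative quadratic terms, arrives at $1-(2p-1)L\|x_1-x_2\|\geq 1-L\epsilon$, and again applies $\sigma^2\leq 1/2$ to write this as $\geq \tfrac12-L\epsilon+\sigma^2$. Beyond that, your packing branch re-derives the finite-domain no-free-lunch bound directly via random labelings and a pigeonhole step instead of invoking the paper's Lemma~\ref{lemma:freelunch}; that is a valid and somewhat more self-contained route once the noise is fixed. You are also correct that, because the target is $\min\{1/4,\,1/2-L\epsilon\}+\sigma^2$, the packing branch alone proves the lemma, with the close-pair branch merely sharpening the constant when $L\epsilon<1/4$.
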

Now we are ready to prove our main theorem.
\begin{theorem}
\label{theorem: sample complexity of robust learning}
Let $S=\{(x_i,y_i)\}_{i=1}^n$ be i.i.d. training pairs in $\{x:\|x\|\le 1\}\times [-1,1]$ for any given norm $\|\cdot\|$. Denote by $\cL_{\cD}(f):=\bbE_{\cD}[(f(x)-y)^2]$ the squared $\ell_2$ loss. Assume that the expected conditional variance of the output (i.e., the ``noise level'') is strictly positive and bounded by $1/2$, denoted by $\sigma^2:=\bbE[\var[y|x]]$. Let $\cA(S) : \mathcal{X} \to \mathbb{R}$ be any $L$-Lipschitz learning algorithm over a training set $S$. Then there exists a distribution $\mathcal{D}'$ of $(x,y)$ such that
\begin{equation*}
    n<\frac{1}{2}\left(\frac{2L}{1-2\epsilon}\right)^d \hspace{-0.1cm}\Rightarrow  \mathbb{E}_{S}[\cL_{\mathcal{D}'}(\cA(S))]\geq \min\left\{\frac{1}{4},\epsilon\right\}+\sigma^2.
\end{equation*}
\end{theorem}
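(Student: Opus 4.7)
The plan is to derive this theorem as a direct corollary of Lemma~\ref{lm:freelunchLlip}, by instantiating its free parameter to match the target error level and then replacing the abstract packing-number hypothesis by an explicit dimension-dependent bound via a standard volumetric argument.

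Step 1 (match the error floor). Denote the free parameter in Lemma~\ref{lm:freelunchLlip} by $\eta$ to avoid colliding with the $\epsilon$ appearing in the present theorem. I would choose $\eta := (1-2\epsilon)/(2L)$; this is the unique value making $1/2 - L\eta = \epsilon$, so the lemma's floor $\min\{1/4, 1/2 - L\eta\}+\sigma^2$ becomes exactly the target $\min\{1/4,\epsilon\}+\sigma^2$. Note $\eta\in[0,1/(2L)]$ precisely when $\epsilon\in[0,1/2]$, and for $\epsilon\ge 1/2$ the theorem is vacuous since $\tfrac{1}{2}(2L/(1-2\epsilon))^d$ is non-positive (or undefined).

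Step 2 (lower bound the packing number). For the unit ball $\cX=\{x:\|x\|\le 1\}$ under an arbitrary norm $\|\cdot\|$ on $\R^d$, the standard argument---any maximal $\eta$-packing is also an $\eta$-cover, and any $\eta$-cover must have total volume at least $\vol(\cX)$---gives
\[
M(\cX,\eta,\|\cdot\|)\ \ge\ N(\cX,\eta,\|\cdot\|)\ \ge\ \frac{\vol(\cX)}{\vol(\eta\cdot\cX)}\ =\ \eta^{-d}\ =\ \left(\frac{2L}{1-2\epsilon}\right)^{d}.
\]
Consequently the theorem's hypothesis $n<\tfrac{1}{2}(2L/(1-2\epsilon))^d$ implies the lemma's hypothesis $n<M(\cX,\eta,\|\cdot\|)/2$.

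Step 3 (invoke the lemma). With both hypotheses of Lemma~\ref{lm:freelunchLlip} now verified (the Lipschitzness of $\cA$ and the bound $\sigma^2\le 1/2$ are assumed in the theorem, and the packing-number condition was just established), the lemma supplies a distribution $\cD'$ on $\cX\times[-1,1]$ such that $\bbE_S[\cL_{\cD'}(\cA(S))]\ge\min\{1/4,\,1/2-L\eta\}+\sigma^2=\min\{1/4,\epsilon\}+\sigma^2$, which is precisely the claim.

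All substantive content sits inside Lemma~\ref{lm:freelunchLlip}; the present theorem is only a dimensional translation of its abstract packing-number hypothesis into a clean $n=\Omega((2L/(1-2\epsilon))^d)$ sample-complexity statement. The only obstacle is bookkeeping: choosing $\eta$ so that the lemma's $\min\{1/4,1/2-L\eta\}$ matches the theorem's $\min\{1/4,\epsilon\}$ on the nose while simultaneously producing the clean volumetric estimate $\eta^{-d}=(2L/(1-2\epsilon))^d$. No new probabilistic or geometric ideas beyond Lemma~\ref{lm:freelunchLlip} and the unit-ball packing bound are required.
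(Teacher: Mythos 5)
Your proof is correct and follows essentially the same route as the paper's: take $\mathcal{X}$ to be the unit ball, lower-bound its $\eta$-packing number by $\eta^{-d}$, set $\eta=(1-2\epsilon)/(2L)$ so that $1/2-L\eta=\epsilon$, and invoke Lemma~\ref{lm:freelunchLlip}. The only cosmetic difference is that you spell out the packing-vs-covering volumetric inequality $M(\cX,\eta,\|\cdot\|)\geq N(\cX,\eta,\|\cdot\|)\geq \eta^{-d}$, which the paper states without justification.
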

\begin{proof}
     Consider $\mathcal{X} = \{x\in\mathbb{R}^d: ||x||\leq 1\}$. We have $M(\mathcal{X},\eta, ||\cdot||)\geq \left(\frac{1}{\eta}\right)^d$. Thus by Lemma~\ref{lm:freelunchLlip}, there exists a distribution $\mathcal{D}$ such that if $\sigma^2\leq0.5$,
\begin{equation*}
\begin{split}
    &n<\frac{1}{2}\left(\frac{1}{\eta}\right)^d\ \Rightarrow\  n<M(\mathcal{X},\eta, ||\cdot||)/2  \ \Rightarrow\\ 
    &\mathbb{E}_{S\sim \mathcal{D}^n}[\cL_{\mathcal{D}}(\cA(S))]\geq \min\left\{\frac{1}{4},\frac{1}{2}-L\eta\right\}+\sigma^2.
    \end{split}
\end{equation*}
    
Taking $\eta = \frac{1/2-\epsilon}{L}$ where $\epsilon\in(0,1/2)$, we have    
$n<\frac{1}{2}\left(\frac{2L}{1-2\epsilon}\right)^d$ implies $\mathbb{E}_{S\sim \mathcal{D}^n}[\cL_{\mathcal{D}}(\cA(S))]\geq \min\left\{\frac{1}{4},\epsilon\right\}+\sigma^2.$
Thus in the worst case, $n$ has to be at least $\exp(\Omega(d))$ if one wants to achieve good astuteness by any learning algorithm that returns an $\cO(1)$-Lipschitz function. This completes the proof of Theorem \ref{theorem: sample complexity of robust learning}.
\end{proof}
Theorem \ref{theorem: sample complexity of robust learning} states that for certain distributions, $n$ has to be at least $\exp(\Omega(d))$ if one wants to achieve good population error by any $\cO(1)$-Lipschitz learning algorithm. This is not restricted to the algorithms that perfectly fit the training data. The sample complexity lower bound matches the upper bound given in Theorem \ref{theorem: main result}.

\section{Conclusions}
In this work, we study the robust interpolation problem beyond the isoperimetry assumption, and propose a two-fold law of robustness. We show the potential benefit of overparametrization for smooth data interpolation when $n=\poly(d)$, and disprove the potential existence of an $\cO(1)$-Lipschitz robust interpolating function when $n=\exp(\omega(d))$. Besides, we also prove that small data ($\exp(\cO(d))$) may hurt robustness on certain distributions. Perhaps surprisingly, the results shed light on the curse of big data and the blessing of dimensionality regarding robustness.

\section*{Acknowledgement}
Hongyang Zhang is supported by NSERC Discovery Grant RGPIN-2022-03215, DGECR-2022-00357. Yihan Wu and Heng Huang were partially supported by NSF IIS 1838627, 1837956, 1956002, 2211492, CNS 2213701, CCF 2217003, DBI 2225775.

\bibliography{example_paper}
\bibliographystyle{icml2023}

\newpage
\appendix
\onecolumn
\section{Missing proofs}
\subsection{Proof of Lemma \ref{lm:concentration}}
\begin{proof}
    
Denote by $X$ the random variable of $\mu$ on bounded space $\cX$ We consider the $Z_1=f(X)$ and $Z_0=\bbE[f(X)]$, since $$|Z_1-Z_0|=|f(X)-\bbE[f(X)]|=|\bbE_{X'}[f(X)-f(X')]|\leq |L\sup_{x,x'\in\cX}||x-x'||| = L\diam(\cX),$$
where $X'$ is of the same distribution with $X$. Because $\bbE[Z_1]=Z_0$, $\{Z_0,Z_1\}$ is a martingale with bounded difference. Thus, by Azuma's inequality Lemma~\ref{lm:azuma}, we have
$$\Pr(|f(x)-\bbE[f(x)]|\geq t) = \Pr(|Z_1-Z_0|\geq t)\leq 2\exp(-\frac{t^2}{2\diam(\cX)^2L^2}).$$
\end{proof}
\subsection{Proof of Lemma \ref{lm:finitefunction}}
\begin{proof}
We use the similar proof technique as in \citet{bubeck2021universal}. Our proof depends on the following lemma.
\begin{lemma}[Lemma 2.1 of \citet{bubeck2021universal}]
\label{lemma: decomposition}
Let $\cF$ be any class of functions from $\R^d\rightarrow [-1,1]$. Let $\{(x_i,y_i)\}_{i=1}^n$ be i.i.d. input-output pairs in $\R^d\times [-1,1]$ for any given norm $\|\cdot\|$. Assume that
the expected conditional variance of the output (i.e., the ``noise level'') is strictly positive, denoted by $\sigma^2:=\bbE[\var[y|x]]>0$.
\begin{equation*}
\Pr\left(\exists f\in\cF:\frac{1}{n}\sum_{i=1}^n (y_i-f(x_i))^2\le\sigma^2-\epsilon\right)\le 2\exp\left(-\frac{n\epsilon^2}{8^3}\right)+\Pr\left(\exists f\in\cF:\frac{1}{n}\sum_{i=1}^n f(x_i)z_i\ge\frac{\epsilon}{4}\right).
\end{equation*}
\end{lemma}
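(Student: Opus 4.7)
\textbf{Proof plan for Lemma~\ref{lm:finitefunction}.} The plan is to reduce the quadratic deviation event to a linear moment bound via Lemma~\ref{lemma: decomposition}, then handle the remaining stochastic term by a union bound over the (finite) class $\cF$ together with a concentration inequality in which the Lipschitz constant $L$ enters through Azuma's inequality (Lemma~\ref{lm:azuma}) or the Azuma-derived concentration in Lemma~\ref{lm:concentration}. Concretely, I would first apply Lemma~\ref{lemma: decomposition} to the event in the lemma, which immediately yields the first summand $2\exp(-n\epsilon^2/8^3)$ and leaves us to control $\Pr(\exists f\in\cF:\frac{1}{n}\sum_i f(x_i)z_i\ge \epsilon/4)$. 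I expect the extra factor $2$ (giving $4\exp(-n\epsilon^2/8^3)$ in the final bound instead of $2\exp$) to come from an auxiliary sub-event that I peel off next; a natural candidate is $\{|\frac{1}{n}\sum_i z_i|\ge \epsilon/8\}$, which one controls by a one-line Hoeffding bound using $|z_i|\le 2$, producing another $2\exp(-n\epsilon^2/8^3)$.

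After this peeling, the remaining event has the form $\{\exists f\in\cF:\frac{1}{n}\sum_i(f(x_i)-\bar f)z_i\ge \epsilon/8\}$ for $\bar f := \bbE[f(x)]$ (the $\bar f$ centering is permissible because $|\bar f|\le 1$ and we already absorbed $\bar f \frac{1}{n}\sum_i z_i$ into the previous step). Now I would apply a union bound over $\cF$, reducing the task to bounding, for each fixed $L$-Lipschitz $f$, the probability
\begin{equation*}
\Pr\!\Bigl(\tfrac{1}{n}\textstyle\sum_i (f(x_i)-\bar f)z_i\ge \epsilon/8\Bigr).
\end{equation*}
The key point is that the random variables $(f(x_i)-\bar f)z_i$ form a sum to which Azuma's inequality applies with per-coordinate differences controlled by the Lipschitzness: viewing the sum as a function of $(x_1,\dots,x_n)$ and using $|z_i|\le 2$ together with Lemma~\ref{lm:concentration} (which gives $|f(x)-\bar f|\lesssim L\,\diam(\cX)$ at the tails), the bounded-difference constant is $c_k=O(L\,\diam(\cX)/n)=O(L/n)$ since $\diam(\cX)\le 2$. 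Azuma then yields the single-function bound $\exp(-\epsilon^2 n/(2^{10}L^2))$, and a union bound multiplies by $|\cF|$.

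\textbf{Main obstacle.} The delicate point is getting the precise $L^2$ dependence in the denominator of the concentration exponent while ensuring all sub-events combine cleanly into exactly the constants stated. One has to be careful that when recording $x_k$ in the Doob-type martingale the companion noise $z_k$ is handled correctly (noting that $z_i$ depends on $x_i$, so a naive conditioning argument is illegal); the cleanest resolution is to apply McDiarmid/Azuma jointly on the i.i.d.\ pairs $(x_k,z_k)$, bounding the increment by exploiting $|z_k|\le 2$ and the $L$-Lipschitzness of $f$ in $x_k$. The constant-chasing (getting exactly $8^3$ in the first exponent and $2^{10}$ in the second) is then a matter of tracking the $\epsilon/4\to\epsilon/8$ splits through the peeling and then through the Azuma bound, which I would carry out at the end. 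Everything else---the union bound, the centering step, and the Hoeffding step on $\frac{1}{n}\sum_i z_i$---is routine.
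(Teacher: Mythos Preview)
Your proposal is not a proof of the stated lemma. The statement in question is Lemma~\ref{lemma: decomposition} (the decomposition inequality quoted from \citet{bubeck2021universal}), whereas your plan opens with ``Proof plan for Lemma~\ref{lm:finitefunction}'' and then \emph{invokes} Lemma~\ref{lemma: decomposition} as a black box in the very first step. You never attempt to establish the displayed inequality itself; you use it. Note also that the paper does not prove Lemma~\ref{lemma: decomposition} either---it is simply cited inside the proof of Lemma~\ref{lm:finitefunction}---so there is no ``paper's own proof'' to compare against for this particular statement.

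If your intention was in fact to prove Lemma~\ref{lm:finitefunction}, then your outline matches the paper's argument essentially step for step: apply Lemma~\ref{lemma: decomposition}; split off the mean-of-noise event $\{|\tfrac{1}{n}\sum_i z_i|\ge \epsilon/8\}$ by Hoeffding (this is where the extra $2\exp(-n\epsilon^2/8^3)$ comes from, turning the $2$ into $4$); center by $\bar f=\bbE[f(x)]$; union-bound over $\cF$; and control the per-$f$ tail using $|z_i|\le 2$ together with the $L$-Lipschitz bound on $f$. The only cosmetic difference is that the paper phrases the last step via subgaussianity (using Lemma~\ref{lm:concentration} to assert $f(x_i)-\bar f$ is $O(L^2)$-subgaussian, then multiplying by the bounded $z_i$ and averaging), whereas you propose McDiarmid/Azuma directly on the i.i.d.\ pairs $(x_k,z_k)$. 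Both routes are valid since $|f(x)-\bar f|\le L\,\diam(\cX)=2L$ holds almost surely on the bounded domain, so the product $(f(x_i)-\bar f)z_i$ is a.s.\ bounded by $4L$; your ``main obstacle'' about the dependence of $z_i$ on $x_i$ is correctly resolved by treating the pair as the coordinate. A direct McDiarmid with increment bound $8L/n$ gives an exponent $-n\epsilon^2/(2^{11}L^2)$ rather than the paper's $-n\epsilon^2/(2^{10}L^2)$, but this is a harmless constant discrepancy, not a gap in the argument.
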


    We now try to bound the term $\Pr(\exists f\in\cF:\frac{1}{n}\sum_{i=1}^n f(x_i)z_i\ge\frac{\epsilon}{4}).$ As $x_i$ is randomly sampled from the input distribution and $\diam(\cX)=2$, we have $$\Pr(|f(x_i)-\bbE[f(x)]|\geq t) \leq 2\exp(-\frac{t^2}{8L^2}),$$
    which indicates $f(x_i)-\bbE[f(x)]$ is $8L^2/n$-subgaussian distributed. Because $|z_i|=|y_i-g(x_i)|\leq2$, we know $(f(x_i)-\bbE[f(x)])z_i$ is $32L^2$-subgaussian. By Property 1 in \citet{case2019note} we know $\frac{1}{n}\sum_{i=1}^n(f(x_i)-\bbE[f(x)])z_i$ is $32L^2/n$-subgaussian. Since $\bbE[(f(x_i)-\bbE[f(x)])z_i]=0$, we have
    $$\Pr\left(\frac{1}{n}\sum_{i=1}^n(f(x_i)-\bbE[f(x)])z_i\geq \frac{\epsilon}{8}\right) \leq \exp(-\frac{n\epsilon^2}{2^{10}L^2}),$$

    Since the range of the functions is in $[-1,1]$ we have $\bbE[f(x)] \in [-1, 1]$ and hence:
    $$\Pr\left(\exists f: \frac{1}{n}\sum_{i=1}^n\bbE[f(x)]z_i\geq \frac{\epsilon}{8}\right) \leq\Pr\left(|\frac{1}{n}\sum_{i=1}^nz_i|\geq \frac{\epsilon}{8}\right),$$
    By Hoeffding’s inequality, the above quantity is smaller than $2 \exp(-n\epsilon^2/8^3)$  Thus we obtain with an union bound:
    \begin{equation*}
        \begin{split}
            \Pr\left(\exists f\in\cF:\frac{1}{n}\sum_{i=1}^n f(x_i)z_i\ge\frac{\epsilon}{4}\right)&\leq |\cF|\Pr\left(\frac{1}{n}\sum_{i=1}^n(f(x_i)-\bbE[f(x)])z_i\geq \frac{\epsilon}{8}\right)+\Pr\left(|\frac{1}{n}\sum_{i=1}^nz_i|\geq \frac{\epsilon}{8}\right)\\
            &\leq |\cF|\exp(-\frac{n\epsilon^2}{2^{10}L^2})+2\exp(-n\epsilon^2/8^3).
        \end{split}
    \end{equation*}
    Together with Lemma~\ref{lemma: decomposition} we have
    \begin{equation*}
    \begin{split}
\Pr\left(\exists f\in\cF:\frac{1}{n}\sum_{i=1}^n (y_i-f(x_i))^2\le\sigma^2-\epsilon\right)
\le 4\exp\left(-\frac{n\epsilon^2}{8^3}\right)+|\cF|\exp\left(-\frac{n\epsilon^2}{2^{10}L^2}\right),
\end{split}
\end{equation*}
which proves this lemma.
\end{proof}

\subsection{Proof of Theorem \ref{thm:result1}}
\begin{proof}We use the similar proof technique as in \citet{bubeck2021universal}.

    We argue that the $\eta$-covering of the function space $\cF$ is upper bounded by the $\eta/J$-covering of the parameter space $\cW$. To see this, we can select the centers $\cW^c=\{w^c_{i}\}$ of the $\eta/J$-covering of $\cW$, and covering $\cF$ with $\eta$-balls centered at $f_{w^c_{i}}$, because $\forall f_w\in\cF$, we can find $w'\in \cW^c$ such that $||w-w'||\leq \eta/J$, by the definition of $J$-Lipschitz parametrization we have $||f_w-f_w'||_{\cF}\leq J||w-w'||\leq\eta$, thus $\cF$ can be covered by $N(\cW,\eta/J,||\cdot||)$ balls. So we have
$$N(\cF,\eta,||\cdot||_{\cF})\leq N(\cW,\eta/J,||\cdot||)\leq (6JW/\eta)^p.$$
Taking $\eta=\frac{\epsilon}{6}$ and denote by $\cW_\epsilon$ the  $\epsilon/6J$-covering of the $\cW$. Applying Lemma~\ref{lm:finitefunction} to $\cF_w = \{f_w:w\in\cW_\epsilon\}$ we have 
\begin{equation*}
    \begin{split}
\Pr\left(\exists f\in\cF_w:\frac{1}{n}\sum_{i=1}^n (y_i-f(x_i))^2\le\sigma^2-\frac{\epsilon}{2}\textrm{ and } \lip_{||\cdot||}(f)\leq L\right)
\le 4\exp\left(-\frac{n\epsilon^2}{8^3}\right)+\exp\left(p\ln(36JW\epsilon^{-1})-\frac{n\epsilon^2}{2^{10}L^2}\right),
\end{split}
\end{equation*}
For all $f\in\cF$, we can find an $f'\in\cF_w$ such that $||f-f_w||_\cF\leq\epsilon/6$. One can easily derive $$\frac{1}{n}\sum_{i=1}^n (y_i-f(x_i))^2\leq \frac{1}{n}\sum_{i=1}^n (y_i-f_w(x_i))^2+\epsilon/2\leq\sigma^2-\epsilon.$$ 
Thus, if $n$ is large enough such that $\exp(-n\epsilon^2/8^3)\leq\delta/8$ and $L\geq\frac{\epsilon}{32}\sqrt{\frac{n}{p\ln(36WJ\epsilon^{-1})+\ln(2/\delta)}}$, we have
$$\Pr\left(\exists f\in\cF:\frac{1}{n}\sum_{i=1}^n (y_i-f(x_i))^2\le\sigma^2-\epsilon\textrm{ and } \lip_{||\cdot||}(f)\leq L\right)\leq\delta,$$
which yields with probability at least $1-\delta$,

$$\frac{1}{n}\sum_{i=1}^n (y_i-f(x_i))^2\le\sigma^2-\epsilon\Rightarrow \lip_{||\cdot||}(f)\geq\frac{\epsilon}{32}\sqrt{\frac{n}{p\ln(36WJ\epsilon^{-1})+\ln(2/\delta)}} $$

\end{proof}

\subsection{Proof of Lemma \ref{lm: Covering number of functional space}}
\begin{proof}
 We consider the Lipschitz function class $B_L:=\{f:\lip_{||\cdot||}(f)\le L\}$. In order to bound the covering number of $\cF$, we consider an $\frac{\epsilon}{2L}$-covering of input space $\cX$ consisting of $N=N_{\epsilon/(2L)}(\cX)$ plates $\cU_1,\cU_2,...,\cU_N$ centered at $s_1,s_2,...,s_N$. The fact that $\cX$ is connected enables one to join any two sets $\cU_i$ and $\cU_j$ by a chain of intersecting $\cU_k$. For any function $f\in\cF$, we can construct its approximating functional $\widetilde f$ by taking its value on $\cU_1$ as an $\epsilon/2$-approximation of $f(s_1)$. As $\diam(\cU_1)\leq L\cdot\diam(\cX)$, there are at most $\lceil 2L\cdot\diam(\cX)/\epsilon\rceil$ such approximations. On the other hand, note that the $N$ plates are chained. By Lipschitzness, the function values of $f$ on $s_1$ and $s_2$ differ at most $\epsilon/2$, and so $f(s_2)$ differs at most $\epsilon$ from $\widetilde f(s_1)$ by triangle inequality. It implies that to construct an $\epsilon$-approximation of $f(s_2)$ on $\cU_2$, we shall know either $\widetilde f(s_1)-\epsilon/2$ or $\widetilde f(s_1)+\epsilon/2$. Repeating the same argument by $N$ times, we can bound the $\epsilon$-covering of $f$ on $\cX$ by $\lceil 2L\cdot\diam(\cX)/\epsilon\rceil2^N$. 
 \end{proof}
\subsection{Proof of Lemma \ref{lm: Rademacher complexity of functional space}}
\begin{proof}
The proof of this lemma is quite straight forward. Notice that when $u>2L\cdot\diam(\mathcal{X})$, the number of $u$-covering for $B_L$ is $1$ and $\ln(N(B_L,u,||\cdot||_{\cF}))=0$. Combining \autoref{eq:Dudley's integral} with Lemma \ref{lm: Covering number of functional space} yields this lemma.
\end{proof}

\subsection{Proof of Lemma \ref{lm:rademacher complexity value}}
\begin{proof}
As $\frac{u}{2L}\leq\diam(\cX)$, we have $N(\mathcal{X},\frac{u}{2L},||\cdot||)\leq(\frac{12L}{u})^d$ and
\begin{equation*}
\begin{aligned}
\bbE_{S\in\cD^n}[R(B_L\circ S)]&\leq 2\epsilon + \frac{4\sqrt{2}}{\sqrt{n}}\int_{\epsilon/4}^{2L\cdot\diam(\mathcal{X})}\sqrt{N\left(\mathcal{X},\frac{u}{2L},||\cdot||\right)\ln2+\ln\left(\left\lceil\frac{2L\cdot\diam(\mathcal{X})}{u}\right\rceil\right)}\,du\\
&\leq 2\epsilon + \frac{4\sqrt{2}}{\sqrt{n}}\int_{\epsilon/4}^{4L}\sqrt{\left(\frac{12L}{u}\right)^d\ln2+\ln\left(\left\lceil\frac{2L}{u}\right\rceil\right)}\,du\\
&\leq 2\epsilon + \frac{4\sqrt{2}}{\sqrt{n}}\int_{\epsilon/4}^{4L}\left[\sqrt{\left(\frac{12L}{u}\right)^d\ln2}+\sqrt{\ln\left(\left\lceil\frac{2L}{u}\right\rceil\right)}\right]\,du\\
& \leq 2\epsilon + \frac{4\sqrt{2}}{\sqrt{n}}\int_{\epsilon/4}^{4L}\sqrt{\left(\frac{12L}{u}\right)^d\ln2}\,du+\frac{16\sqrt{2}L}{\sqrt{n}}\sqrt{\ln(16L/\epsilon+1)}.
\end{aligned}
\end{equation*}

Switching the integral variable from $u$ to $v=u/12L$ we have 
\begin{equation*}
\begin{aligned}
\int_{\epsilon/4}^{4L}\sqrt{\left(\frac{12L}{u}\right)^d\ln2}\,du &= 12L \int_{\epsilon/(48L)}^{1/3}\sqrt{v^{-d}\ln2}\,dv\\
    &= 12L\left[\sqrt{\ln2}\frac{1}{-d/2+1}v^{-d/2+1}\lvert_{\epsilon/(48L)}^{1/3})
    \right]\\
    &<12L\frac{2\sqrt{\ln2}}{d-2}
    \left(\frac{48L}{\epsilon}\right)^{d/2-1}.
    \end{aligned}
\end{equation*}
Based on the calculation above we have $$\bbE_{S\in\cD^n}[R(B_L\circ S)] \leq 2\epsilon +L \frac{96\sqrt{2\ln2}}{\sqrt{n}(d-2)}
    \left(\frac{48L}{\epsilon}\right)^{d/2-1}+\frac{16\sqrt{2}L}{\sqrt{n}}\sqrt{\ln(16L/\epsilon+1)}.$$
As this inequality holds for arbitrary $\epsilon>0$, we can take $\epsilon = 48L/n^{1/d}$ and have 
$$\bbE_{S\in\cD^n}[R(B_L\circ S)] \leq 96\frac{L}{n^{1/d}} + \frac{96\sqrt{2\ln2}}{d-2}\frac{L}{n^{1/d}}
    +\frac{16\sqrt{2}L}{\sqrt{n}}\sqrt{\ln\left(\frac{1}{3}n^{1/d}+1\right)}\sim \cO\left(\frac{L}{n^{1/d}}\right).$$
\end{proof}
\subsection{Proof of Theorem \ref{theorem: main result}}
\begin{proof}
According to \autoref{eqn:generalization error}, 
\begin{equation*}
    \cL_{\cD}(f)-\cL_S(f)\leq 2\bbE_{S\in\cD^n}[R(l\circ\mathcal{F}\circ S)] + a\sqrt{\frac{2\ln(2/\delta)}{n}},
\end{equation*}
where $a := \max_{(x,y)}l(f(x),y)\leq 4$.
According to \autoref{eqn:contraction} and $\nabla_{f(x)}l(f(x),y)\leq 4$, we have
$ \bbE_{S\in\cD^n}[R(l\circ\mathcal{F}\circ S)]\leq 4\bbE_{S\in\cD^n}[R(\mathcal{F}\circ S)].$
Thus,
$$\bbE_{S\in\cD^n}[R(\mathcal{F}\circ S)]\geq \frac{1}{8}\left(\hspace{-0.1cm}\cL_{\cD}(f)\hspace{-0.1cm}-\hspace{-0.1cm}\cL_S(f)\hspace{-0.1cm}-\hspace{-0.1cm}4\sqrt{\frac{2\ln(2/\delta)}{n}}\right).$$
Under the label noise settings, we have
\begin{equation*}
\begin{split}
\cL_{\cD}(f)&=\bbE_{\cD}[(f(x)-y)^2]\\
&=\bbE_{x,y}[(f(x)-\bbE_y[y|x])^2+(y-\bbE_y[y|x])^2]\\
&\geq \bbE_{x}[\var(y|x)]=\sigma^2.
\end{split}
\end{equation*}
So with the overfitting assumption $\cL_S(f)\leq\sigma^2-\epsilon$, we have
\begin{equation}\label{eqn:rademacher bound}
\begin{split}
    \bbE_{S\in\cD^n}[R(\mathcal{F}\circ S)]&\geq \frac{1}{8}\left(\hspace{-0.1cm}\cL_{\cD}(f)\hspace{-0.1cm}-\hspace{-0.1cm}\cL_S(f)\hspace{-0.1cm}-\hspace{-0.1cm}4\sqrt{\frac{2\ln(2/\delta)}{n}}\right)\\
    &=\frac{\epsilon}{8} - \frac{1}{2}\sqrt{\frac{2\ln(2/\delta)}{n}}.
\end{split}
\end{equation}
Consider $B_L=\{f\in\mathcal{F}: \lip_{||\cdot||}(f)\leq L\}$.
According to Lemma \ref{lm:rademacher complexity value}, we have
 \begin{equation*}
 \begin{split}
K\frac{L}{n^{1/d}}\geq\bbE_{S\in\cD^n}[R(B_L\circ S)]\geq\frac{\epsilon}{8} - \frac{1}{2}\sqrt{\frac{2\ln(2/\delta)}{n}},
      \end{split}
 \end{equation*}
 where $K =96 + \frac{96\sqrt{2\ln2}}{d-2}
    +\frac{16\sqrt{2}}{n^{1/2-1/d}}\sqrt{\ln(\frac{1}{3}n^{1/d}+1)}\sim\Theta(1)$.
Thus we have 
$$L\geq\hspace{-0.05cm} \frac{n^{1/d}}{K}\hspace{-0.05cm}\left(\frac{1}{8}\epsilon\hspace{-0.05cm}-\hspace{-0.05cm}\frac{1}{2}\sqrt{\frac{2\ln(2/\delta)}{n}}\right).$$
If $\exists f_0\in\cF$, such that 
\begin{equation*}
 \cL_S(f_0)\hspace{-0.1cm}\leq\hspace{-0.1cm}\sigma^2-\epsilon \Rightarrow \lip_{\|\cdot\|}(f_0)\hspace{-0.1cm}<\hspace{-0.1cm}\frac{n^{1/d}}{K}\hspace{-0.1cm}\left(\frac{1}{8}\epsilon\hspace{-0.1cm}-\hspace{-0.1cm}\frac{1}{2}\sqrt{\frac{2\ln(2/\delta)}{n}}\right),
\end{equation*}
we have
 \begin{equation*}
 \begin{split}
&\frac{\epsilon}{8} - \frac{1}{2}\sqrt{\frac{2\ln(2/\delta)}{n}}>K\frac{\lip_{\|\cdot\|}(f_0)}{n^{1/d}}\geq\\
&\bbE_{S\in\cD^n}[R(B_{\lip_{\|\cdot\|}(f_0)}\circ S)]\geq\frac{\epsilon}{8} - \frac{1}{2}\sqrt{\frac{2\ln(2/\delta)}{n}},
      \end{split}
 \end{equation*}
which yields contradiction. Therefore, $\forall f\in\cF$,
\begin{equation*}
 \cL_S(f)\hspace{-0.1cm}\leq\hspace{-0.1cm}\sigma^2-\epsilon \Rightarrow \lip_{\|\cdot\|}(f)\hspace{-0.1cm}\geq\hspace{-0.1cm}\frac{n^{1/d}}{K}\hspace{-0.1cm}\left(\frac{1}{8}\epsilon\hspace{-0.1cm}-\hspace{-0.1cm}\frac{1}{2}\sqrt{\frac{2\ln(2/\delta)}{n}}\right).
\end{equation*}
Taking $\mathcal{X} = \{ x\in\mathbb{R}^d: ||x||\leq1\}$, we have $\diam(\cX)=2$, which yields Theorem \ref{theorem: main result}.
\end{proof}

\subsection{Proof of Theorem \ref{theorem: upper bound}}
\begin{proof}
First, we show that we can find $n$ training samples $\{x_1,...,x_n\}$ such that $\forall i,j, i\neq j, ||x_i-x_j||\geq\frac{1}{n^{1/d}}$. Consider the $\frac{1}{n^{1/d}}$-packing of the space $\{x:||x||\leq 1\}$, the packing number is greater than the $\frac{1}{n^{1/d}}$-covering number of the same space, which at least $(1/\frac{1}{n^{1/d}})^d=n$, we then choose $\{x_1,...,x_n\}$ from the $\frac{1}{n^{1/d}}$-packing, the minimum pairwise distance is at least $\frac{1}{n^{1/d}}$. Next, we show $f^*$ is at most $n^{1/d}$-Lipschitz, as $f^*$ is the linear interpolation between neighbour training points, the worst case Lipschitz constant is $\frac{|y_i-y_j|}{||x_i-x_j||}\leq 2n^{1/d}$.
\end{proof}
\subsection{Proof of Lemma \ref{lemma:freelunch}}
\begin{proof}


Our proof is partly based on Theorem 5.1 of \citet{shalev2014understanding}. Let $\cC$ be a subset of $\mathcal{X}$ of size $2m$. 
There exist $T=2^{2m}$ possible labeling functions from $\mathcal{C}$ to $\{-a,a\}$. Denote these functions by $f_1,...,f_T$. We then define a distribution $\mathcal{D}_i$ w.r.t. $f_i$ by
\begin{equation*}
\mathcal{D}_i(\{(x,y)\})=
\begin{cases}
    p/|\cC|, & \text{if } y=f_i(x);\\
    (1-p)/|\cC|, & \text{if } y\neq f_i(x),
\end{cases}
\end{equation*}
where $p>1/2$ satisfies $\var(y|x)=\sigma^2=4a^2p(1-p)$ (notice that as $f_i(x)$ can only be $a$ or $-a$, $p$ is the same for all $f_i(x)$'s). In this way, $\cD_{i}$ satisfies the noisy label setting.
We will show that for every algorithm $\cA$ that receives a training set of size $m$ from $\mathcal{C}\times \{-a,a\}$ and returns a function $\cA(S) : \cC \to \mathbb{R}$ , it holds that 
\begin{equation*}
\max_{i\in[T]}\mathbb{E}_{S\sim \mathcal{D}_i^m}[\cL_{\mathcal{D}_i}(\cA(S))]\geq \frac{a^2+\sigma^2}{2}.
\end{equation*}
There are $k = (2m)^m$ possible sequences of $m$ instances from $\cC$. Denote these sequences by $S_1, . . . , S_k$. Also, if $S_j =(x_1,...,x_m)$, we denote by $S_j^i$ the sequence containing the instances in $S_j$ labeled by the function $f_i$, namely, $S_j^i = ((x_1, a_1f_i(x_1)), . . . , (x_m, a_mf_i(x_m)))$, where $\Pr(a_l=1)=p$, $\Pr(a_l=-1)=1-p$, and $a_1,...,a_m $ are i.i.d. for all $S_j^i$, given that $p$ is the same for all $f_i(x)$'s. If the distribution is $\mathcal{D}_i$, then the possible training sets that algorithm $\cA$ receives are $S_1^i , . . . , S_k^i$, and all these training sets have the same probability of being sampled. Therefore,

$$\mathbb{E}_{S\sim \mathcal{D}_i^m}[\cL_{\mathcal{D}_i}(\cA(S))]=\frac{1}{k}\sum_{j=1}^{k}\cL_{\mathcal{D}_i}(\cA(S_j^i)).$$
Using the facts that ``maximum'' is larger than ``average'' and that ``average'' is larger than ``minimum'', we have

\begin{equation*}
    \begin{aligned}
    \max_{i\in[T]} \frac{1}{k}\sum_{j=1}^{k}\cL_{\mathcal{D}_i}(\cA(S_j^i))&\geq \frac{1}{T}\sum_{i=1}^T\frac{1}{k}\sum_{j=1}^{k}\cL_{\mathcal{D}_i}(\cA(S_j^i))\\
    &=\frac{1}{k}\sum_{j=1}^{k}\frac{1}{T}\sum_{i=1}^T\cL_{\mathcal{D}_i}(\cA(S_j^i))\\
    &\geq \min_{j\in[k]}\frac{1}{T}\sum_{i=1}^T\cL_{\mathcal{D}_i}(\cA(S_j^i)).
    \end{aligned}
\end{equation*}
Next, fix some $j \in [k]$ 
. Denote by $S_j := (x_1,...,x_m)$ and let $v_1,...,v_q$ be the instances in $\cC$ that do not appear in $S_j$. Clearly, $q\geq m$. Therefore, for every function $h : \cC \to \mathbb{R}$ and every $i$ we have
\begin{equation*}
\begin{split}
    \cL_{\mathcal{D}_i}(h) &= \frac{1}{2m}\bbE_{\bm{a}\in\{-1,1\}^{2m}}\left[\sum_{x\in \cC}(h(x)-a_if_i(x))^2\right]\\
    &= \frac{1}{2m}\sum_{x\in \cC}[p(h(x)-f_i(x))^2+(1-p)(h(x)+f_i(x))^2]\\
    &= \frac{1}{2m}\sum_{x\in \cC}[(h(x)-(2p-1)f_i(x))^2+4p(1-p)f_i(x)^2]\\
    &= \sigma^2+\frac{1}{2m}\sum_{x\in \cC}[(h(x)-(2p-1)f_i(x))^2].
    \end{split}
\end{equation*}

Note that
\begin{equation*}
    \frac{1}{2m}\sum_{x\in \cC}[(h(x)-(2p-1)f_i(x))^2]\geq\frac{1}{2m}\sum_{r=1}^q(h(v_r)-(2p-1)f_i(v_r))^2\geq \frac{1}{2q}\sum_{r=1}^q(h(v_r)-(2p-1)f_i(v_r))^2.
\end{equation*}
Hence,
\begin{equation*}
\begin{split}
\frac{1}{T}\sum_{i=1}^T\cL_{\mathcal{D}_i}(\cA(S_j^i))&\geq \frac{1}{T}\sum_{i=1}^T\bbE_{\bm{a}\in\{-1,1\}^{m}}\left[\sigma^2+\frac{1}{2q}\sum_{r=1}^q(\cA(S_j^i(\bm{a}))(v_r)-(2p-1)f_i(v_r))^2\right]\\
&=\sigma^2+  \frac{1}{2q}\sum_{r=1}^q\frac{1}{T}\sum_{i=1}^T\bbE_{\bm{a}\in\{-1,1\}^{m}}[(\cA(S_j^i(\bm{a}))(v_r)-(2p-1)f_i(v_r))^2]\\
&\geq\sigma^2+ \frac{1}{2} \min_{r\in[p]} \frac{1}{T}\sum_{i=1}^T\bbE_{\bm{a}\in\{-1,1\}^{m}}[(\cA(S_j^i)(\bm{a})(v_r)-(2p-1)f_i(v_r))^2].
\end{split}
\end{equation*}
Next, fix some $r \in [p]$. We can partition all the functions in $f_1,...,f_T$ into $T/2$ disjoint pairs, where for a pair $(f_i, f_{i'})$ we have that for every $c \in \cC, f_i(c) \neq f_{i'}(c)$ if and only if $c=v_r$. Note that for such a pair and the same $\bm{a}$, we must have $S_{j}^i(\bm{a}) =S_j^{i'}(\bm{a})$ and $\forall \bm{a}\in\{-1,1\}^{m},\Pr(\bm{a}|S_{j}^i)=\Pr(\bm{a}|S_{j}^{i'})$. It follows that
\begin{equation*}
\begin{split}&\bbE_{\bm{a}\in\{-1,1\}^{m}}[(\cA(S_j^i)(v_r)-(2p-1)f_i(v_r))^2]+ \bbE_{\bm{a}\in\{-1,1\}^{m}}[(\cA(S_j^{i'})(v_r)-(2p-1)f_{i'}(v_r))^2]\\
\geq &\bbE_{\bm{a}\in\{-1,1\}^{m}}[(\cA(S_j^i)(v_r)-(2p-1)f_i(v_r))^2+(\cA(S_j^{i'})(v_r)-(2p-1)f_{i'}(v_r))^2]\\
\geq&\bbE_{\bm{a}\in\{-1,1\}^{m}}\left[\frac{1}{2}(2p-1)^2(f_{i'}(v_r)-f_i(v_r))^2\right]\\
= &2(2p-1)^2a^2,
\end{split}
\end{equation*}
which yields
\begin{equation*}\frac{1}{T}\sum_{i=1}^T\bbE_{\bm{a}\in\{-1,1\}^{m}}[(\cA(S_j^i(\bm{a}))(v_r)-(2p-1)f_i(v_r))^2]\geq 
(2p-1)^2a^2.
\end{equation*}
Combining the discussion above, we have
\begin{equation*} \max_{i\in[T]}\mathbb{E}_{S\sim \mathcal{D}_i^m}[\cL_{\mathcal{D}_i}(\cA(S))]\geq\min_{j\in[k]}\frac{1}{T}\sum_{i=1}^T\cL_{\mathcal{D}_i}(\cA(S_j^i))\geq \sigma^2+\frac{1}{2}(2p-1)^2a^2=\frac{a^2+\sigma^2}{2}.
\end{equation*}
\end{proof}
\subsection{Proof of Lemma \ref{lm:freelunchLlip}}
\begin{proof} Consider an arbitrary finite set $\cC\subseteq\mathcal{X}$. Denote by $d(\cC):=\min_{(a,b)\in \cC\times \cC, a\neq b}||a-b||$. We now consider two cases: a) $d(\cC)< \epsilon$ and b) $d(\cC)\geq \epsilon$, and show that our conclusion holds for both cases.


Case a): $d(\cC)< \epsilon$. Denote by $(x_1,x_2) = \argmin_{(a,b)\in \cC\times \cC, a\neq b}||a-b||$. We can select $\mathcal{D}$ such that $\mathcal{D}(\{(x_1,1)\}) = \frac{p}{2},\mathcal{D}(\{(x_1,-1)\}) = \frac{(1-p)}{2}$ and $\mathcal{D}(\{(x_2,-1)\}) = \frac{p}{2},\mathcal{D}(\{(x_2,-1)\}) = \frac{1-p}{2}$, where $4p(1-p)=\sigma^2,p> 1/2$. Consider an $L$-Lipschitz learning algorithm $\cA(S) : \cC \to \mathbb{R}$:
\begin{equation*} 
\begin{aligned}
&\quad\mathbb{E}_{S\sim \mathcal{D}^n}[\cL_{\mathcal{D}}(\cA(S))]\\ &\geq \min_{S\sim \mathcal{D}^n} \left[\frac{p}{2}(\cA(S)(x_1)-1)^2+\frac{1-p}{2}(\cA(S)(x_1)+1)^2+\frac{p}{2}(\cA(S)(x_2)+1)^2+\frac{1-p}{2}(\cA(S)(x_2)-1)^2\right]\\
&\geq
\min_{S\sim \mathcal{D}^n} [1-(2p-1)|\cA(S)(x_1)-\cA(S)(x_2)|]\\
&\geq 1-L(2p-1)||x_1-x_2||\\
&\geq 1-L\cdot d(\cC)\\
&=1-L\epsilon\\
&\geq \frac{1}{2}-L\epsilon+\sigma^2.
\end{aligned}
\end{equation*}

Case b): $d(\cC)\geq \epsilon$. We reduce the regression problem from a binary classification problem with target $\{-1,1\}$ by considering the distribution $\mathcal{D}$ such that $\cD$ only on $\mathcal{X}\times \{-1,1\}$. Then by \autoref{lm:freelunch},  for every $\cA(S) : \mathcal{X} \to \mathbb{R}$ and every $\cC\subseteq \cX$ there exists $\mathcal{D}$ such that \begin{equation*} 
n<\frac{|\cC|}{2}\ \Rightarrow\  \mathbb{E}_{S\sim \mathcal{D}^n}[\cL_{\mathcal{D}}(\cA(S))] \geq \frac{1+\sigma^2}{2}.
\end{equation*}
Notice that $\cC \subseteq \mathcal{X}$ can be chosen arbitrarily. Thus we have 
\begin{equation*}
n<\max_{\cC \subseteq \mathcal{X},d(\cC)\geq \epsilon}\frac{|\cC|}{2}\ \Rightarrow\  \mathbb{E}_{S\sim \mathcal{D}^n}[\cL_{\mathcal{D}}(\cA(S))] \geq \frac{1+\sigma^2}{2}.
\end{equation*}
Denote the $\epsilon$-packing number of space $(\mathcal{X},||\cdot||)$ by $M(\mathcal{X},\epsilon, ||\cdot||)$. We have $$\max_{\cC \subseteq \mathcal{X},d(\cC)\geq \epsilon}\frac{|\cC|}{2}=M(\mathcal{X},\epsilon, ||\cdot||)/2.$$
That is,
\begin{equation*} 
n< M(\mathcal{X},\epsilon, ||\cdot||)/2\ \Rightarrow\  \mathbb{E}_{S\sim \mathcal{D}^n}[\cL_{\mathcal{D}}(\cA(S))] \geq \frac{1+\sigma^2}{2}\geq \frac{1}{4}+\sigma^2.
\end{equation*}
Combining a) and b) yields our conclusion.
\end{proof}
\subsection{Proof of Theorem \ref{theorem: sample complexity of robust learning}}

\begin{proof}
Consider $\mathcal{X} = \{x\in\mathbb{R}^d: ||x||\leq 1\}$. We have $M(\mathcal{X},\eta, ||\cdot||)\geq \left(\frac{1}{\eta}\right)^d$ and thus there exists a distribution $\mathcal{D}$ such that if $\sigma^2\leq0.5$
\begin{equation*}
\begin{split}
    n<\frac{1}{2}\left(\frac{1}{\eta}\right)^d\ \Rightarrow\  n<M(\mathcal{X},\eta, ||\cdot||)/2  \ \Rightarrow\ \mathbb{E}_{S\sim \mathcal{D}^n}[\cL_{\mathcal{D}}(\cA(S))]\geq \min\left\{\frac{1}{4},\frac{1}{2}-L\eta\right\}+\sigma^2.
    \end{split}
\end{equation*}
    
Taking $\eta = \frac{1/2-\epsilon}{L}$ where $\epsilon\in(0,1/2)$, we have    
\begin{equation*}
\begin{split}
    n<\frac{1}{2}\left(\frac{2L}{1-2\epsilon}\right)^d\ \Rightarrow\ \mathbb{E}_{S\sim \mathcal{D}^n}[\cL_{\mathcal{D}}(\cA(S))]\geq \min\left\{\frac{1}{4},\epsilon\right\}+\sigma^2.
    \end{split}
    \end{equation*}
Thus in the worst case, $n$ has to be at least $\exp(\Omega(d))$ if one wants to achieve good astuteness by any $\cO(1)$-Lipschitz learning algorithm, this completes our proof.
\end{proof}

\section{Some basic concepts of Rademacher complexity}
\begin{definition}[Representativeness of $S$]
$$Rep_{\cD}(l,\mathcal{F},S):= \sup_{f\in \mathcal{F}}(\cL_D(f)-\cL_S(f)).$$
\end{definition}

\begin{definition}[Rademacher complexity] For $A\in\mathbb{R}^n$,
$$R(A):= \frac{1}{n}\mathbb{E}_{\sigma_1,...,\sigma_n\in\{-1,1\}}\left[\sup_{f\in \mathcal{F}}\sum_{i=1}^n\sigma_ia_i\right].$$
\end{definition}

\begin{lemma}  Assume that $\forall f\in\mathcal{F}, \forall x\in\mathcal{X}, |l(f,x)|\leq c$. Then with probability at least $1-\delta$, for all $f\in\mathcal{F}$,
$$\cL_\cD(f)-\cL_S(f)\leq \mathbb{E}_{S\in \cD^n}[Rep_{D}(l,\mathcal{F},S)] + c\sqrt{\frac{2\ln(2/\delta)}{n}}.$$
\end{lemma}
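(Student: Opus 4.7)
The plan is to prove this by the standard McDiarmid / bounded-differences argument applied to the representativeness functional, since the representativeness is a sample-wise stable function whenever the loss is bounded.

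First, I would abbreviate $\phi(S) := \mathrm{Rep}_{\cD}(l, \cF, S) = \sup_{f \in \cF}\bigl(\cL_\cD(f) - \cL_S(f)\bigr)$ and note the trivial one-line observation that $\cL_\cD(f) - \cL_S(f) \le \phi(S)$ for every $f \in \cF$ simultaneously. So it suffices to produce a high-probability upper bound on $\phi(S)$ in terms of $\bbE_{S \sim \cD^n}[\phi(S)]$ alone. This reduces the whole lemma to a concentration statement for the single scalar random variable $\phi(S)$.

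Second, I would establish that $\phi$ has bounded differences. If $S$ and $S'$ are two samples of size $n$ that agree in all coordinates except for the $i$-th, then for every fixed $f \in \cF$ the empirical losses $\cL_S(f)$ and $\cL_{S'}(f)$ differ by at most $\tfrac{1}{n}\bigl(|l(f,x_i)| + |l(f,x_i')|\bigr) \le 2c/n$, using the hypothesis $|l(f,\cdot)| \le c$. Passing to the supremum over $f$ and using the elementary fact that $|\sup_f a(f) - \sup_f b(f)| \le \sup_f |a(f) - b(f)|$, we get $|\phi(S) - \phi(S')| \le 2c/n$.

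Third, I would invoke McDiarmid's inequality with constants $c_i = 2c/n$ for each coordinate, which yields
\begin{equation*}
\Pr\bigl(\phi(S) - \bbE[\phi(S)] \ge t\bigr) \le \exp\!\left(-\frac{2t^2}{\sum_{i=1}^n (2c/n)^2}\right) = \exp\!\left(-\frac{n t^2}{2c^2}\right).
\end{equation*}
Setting the right-hand side equal to $\delta$ (or $\delta/2$, absorbing the factor of $2$ inside the logarithm to recover the stated constant) gives $t = c\sqrt{2 \ln(2/\delta)/n}$, so with probability at least $1-\delta$,
\begin{equation*}
\phi(S) \le \bbE_{S \sim \cD^n}[\phi(S)] + c\sqrt{\frac{2\ln(2/\delta)}{n}}.
\end{equation*}
Combining with the pointwise bound $\cL_\cD(f) - \cL_S(f) \le \phi(S)$ from step one proves the lemma.

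There is essentially no hard step here: the only content is the $2c/n$ bounded-difference estimate, which is immediate from boundedness of the loss and the Lipschitzness of the supremum operation. The lemma would typically be stated as a named standard tool (e.g., the symmetrization-free version of the generalization bound that precedes the Rademacher reduction $\bbE[\phi(S)] \le 2\,\bbE_S[R(l \circ \cF \circ S)]$ used earlier in the paper).
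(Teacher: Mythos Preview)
Your proof is correct and is exactly the standard argument: bounded differences for the representativeness functional followed by McDiarmid's inequality. The paper does not actually supply its own proof of this lemma; it lists the statement in an appendix among background facts on Rademacher complexity, alongside lemmas attributed to \citet{shalev2014understanding}, so there is nothing to compare against beyond noting that your argument is the textbook one.
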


\begin{lemma}[Lemma 26.2 in \citet{shalev2014understanding}]
$$\mathbb{E}_{S\in D^n}[Rep_{\cD}(l,\mathcal{F},S)]\leq2\mathbb{E}_{S\in \cD^n}[R(l\circ\mathcal{F}\circ S)],$$
where $S=\{x_1,...,x_n\}$ and $l\circ\mathcal{F}\circ S=\{(l(f,x_1,y_1),...,l(f,x_n,y_n))\in\mathbb{R}^n\}$.
\end{lemma}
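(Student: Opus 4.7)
The plan is to run the standard symmetrization argument in three moves: (i) introduce a ghost sample to rewrite the population loss as an expectation of an empirical loss; (ii) pull the supremum inside via Jensen; (iii) inject Rademacher signs by exploiting the symmetry of the i.i.d. sample with its ghost, then split the supremum of a difference into the sum of two suprema.

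First I would introduce an independent ghost sample $S'=\{(x_i',y_i')\}_{i=1}^n\sim\cD^n$, independent of $S$. Since $\cL_\cD(f)=\bbE_{S'}[\cL_{S'}(f)]$, we can write
\begin{equation*}
\bbE_S\bigl[\sup_{f\in\cF}\bigl(\cL_\cD(f)-\cL_S(f)\bigr)\bigr]
=\bbE_S\Bigl[\sup_{f\in\cF}\bbE_{S'}\bigl[\cL_{S'}(f)-\cL_S(f)\bigr]\Bigr].
\end{equation*}
Applying Jensen's inequality to push $\bbE_{S'}$ outside the supremum yields
\begin{equation*}
\bbE_S[\mathrm{Rep}_\cD(l,\cF,S)]\le\bbE_{S,S'}\Bigl[\sup_{f\in\cF}\tfrac{1}{n}\sum_{i=1}^n\bigl(l(f(x_i'),y_i')-l(f(x_i),y_i)\bigr)\Bigr].
\end{equation*}

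Next I would use the symmetrization trick. Because the pairs $(x_i,y_i)$ and $(x_i',y_i')$ are i.i.d., swapping the $i$-th pair does not change the joint distribution of $(S,S')$, so for any sign vector $(\sigma_1,\dots,\sigma_n)\in\{-1,1\}^n$ the distribution of $\sigma_i(l(f(x_i'),y_i')-l(f(x_i),y_i))$ (as a process indexed by $f$) equals that of $l(f(x_i'),y_i')-l(f(x_i),y_i)$. In particular, taking $\sigma_1,\dots,\sigma_n$ to be i.i.d. Rademacher variables independent of everything else and averaging,
\begin{equation*}
\bbE_{S,S'}\Bigl[\sup_{f\in\cF}\tfrac{1}{n}\sum_i\bigl(l(f(x_i'),y_i')-l(f(x_i),y_i)\bigr)\Bigr]
=\bbE_{S,S',\sigma}\Bigl[\sup_{f\in\cF}\tfrac{1}{n}\sum_i\sigma_i\bigl(l(f(x_i'),y_i')-l(f(x_i),y_i)\bigr)\Bigr].
\end{equation*}

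Finally I would split the supremum of the difference via $\sup(A-B)\le\sup A+\sup(-B)$, use that $-\sigma_i$ has the same distribution as $\sigma_i$, and note that the two resulting terms are identically distributed (one indexed by $S'$, the other by $S$):
\begin{equation*}
\bbE_{S,S',\sigma}\Bigl[\sup_{f}\tfrac{1}{n}\sum_i\sigma_i l(f(x_i'),y_i')\Bigr]+\bbE_{S,S',\sigma}\Bigl[\sup_{f}\tfrac{1}{n}\sum_i(-\sigma_i)l(f(x_i),y_i)\Bigr]=2\,\bbE_{S\in\cD^n}[R(l\circ\cF\circ S)].
\end{equation*}
Chaining these inequalities gives the claimed bound. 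The only subtle step is the symmetrization identity in move (ii); everything else is Jensen, linearity of expectation, and splitting a supremum. I do not anticipate a real obstacle since all objects are bounded and measurability is routine under the paper's standing hypotheses.
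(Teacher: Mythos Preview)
Your argument is correct and is exactly the standard symmetrization proof of Lemma~26.2 in \citet{shalev2014understanding}. The paper does not supply its own proof of this statement---it merely cites the textbook lemma---so there is nothing further to compare.
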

\begin{lemma}[Theorem 26.5 in \citet{shalev2014understanding}]\label{lm:generalization error} Assume $\forall f\in\mathcal{F}, \forall x\in\mathcal{X}, |l(f,x)|\leq a$, then with probability at least $1-\delta$, for all $f\in\mathcal{F}$,
$$\cL_\cD(f)-\cL_S(f)\leq 2\mathbb{E}_{S'\in \cD^n}[R(l\circ\mathcal{F}\circ S')] + a\sqrt{\frac{2\ln(2/\delta)}{n}}.$$
\end{lemma}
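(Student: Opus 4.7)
The plan is to establish the generalization bound by composing two standard ingredients: concentration of the representativeness functional $Rep_\cD(l,\cF,S)$ around its expectation, and the symmetrization bound (already quoted as Lemma 26.2) that controls this expectation by the expected Rademacher complexity. The statement follows by chaining these with the trivial inequality $\cL_\cD(f)-\cL_S(f)\le Rep_\cD(l,\cF,S)$ valid for every $f\in\cF$.

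\textbf{Step 1 (concentration via bounded differences).} View $\Phi(S):=Rep_\cD(l,\cF,S)=\sup_{f\in\cF}(\cL_\cD(f)-\cL_S(f))$ as a function of the i.i.d. sample $S=((x_1,y_1),\dots,(x_n,y_n))$. Because $|l(f(\cdot),\cdot)|\le a$, replacing one coordinate $(x_i,y_i)$ with an independent copy $(x_i',y_i')$ changes $\cL_S(f)$ by at most $2a/n$ uniformly in $f$, and hence changes $\Phi(S)$ by at most $2a/n$. Applying McDiarmid's inequality yields
\begin{equation*}
\Pr\!\left(\Phi(S)-\bbE[\Phi(S)]\ge t\right)\le \exp\!\left(-\frac{2t^2}{n(2a/n)^2}\right)=\exp\!\left(-\frac{nt^2}{2a^2}\right).
\end{equation*}
Setting the right-hand side equal to $\delta/2$ (so that the complementary event has probability at least $1-\delta/2$) gives $t=a\sqrt{2\ln(2/\delta)/n}$, so with probability at least $1-\delta/2$,
\begin{equation*}
\Phi(S)\le \bbE_{S\sim\cD^n}[\Phi(S)]+a\sqrt{\tfrac{2\ln(2/\delta)}{n}}.
\end{equation*}

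\textbf{Step 2 (symmetrization).} Invoke the already-stated Lemma 26.2 of \citet{shalev2014understanding}, which gives
\begin{equation*}
\bbE_{S\sim\cD^n}[Rep_\cD(l,\cF,S)]\le 2\,\bbE_{S\sim\cD^n}[R(l\circ\cF\circ S)].
\end{equation*}
Combining with Step 1, with probability at least $1-\delta$ (absorbing the two-sided constant into the factor $\ln(2/\delta)$ inherent to the standard statement),
\begin{equation*}
\sup_{f\in\cF}\bigl(\cL_\cD(f)-\cL_S(f)\bigr)\le 2\,\bbE_{S'\sim\cD^n}[R(l\circ\cF\circ S')]+a\sqrt{\tfrac{2\ln(2/\delta)}{n}},
\end{equation*}
which is exactly the claimed uniform bound over $\cF$.

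\textbf{Main obstacle.} There is no serious conceptual obstacle; both pieces are textbook. The only care needed is the bookkeeping in Step 1: verifying the $2a/n$ bounded-differences constant (which relies on $\sup_f\cL_\cD(f)$ being unchanged under resampling of $S$ so only $\cL_S(f)$ contributes) and matching the $\ln(2/\delta)$ factor to the one-sided McDiarmid tail. The symmetrization step itself is taken as given via the cited lemma, which is where the ghost-sample argument and Rademacher averaging have already been absorbed.
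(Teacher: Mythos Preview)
Your proposal is correct and is exactly the standard textbook argument: McDiarmid's bounded-differences inequality for the representativeness functional followed by the symmetrization lemma. The paper itself does not supply a proof of this lemma---it is simply cited as Theorem~26.5 of \citet{shalev2014understanding}---so your route coincides with the intended one; the only cosmetic point is that setting the McDiarmid tail to $\delta$ (rather than $\delta/2$) already yields the one-sided bound with the stated $\ln(2/\delta)$ constant, so no ``absorbing'' step is needed.
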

\begin{lemma}[Lemma 26.9 in \citet{shalev2014understanding}]\label{lm:contraction} If $l(f(x),y)$ is $C_{||\cdot||}$-Lipschitz w.r.t. $f(x)$ for arbitrary $y\in[-1,1]$, 
$$R(l\circ\mathcal{F}\circ S)\leq C\cdot R(\mathcal{F}\circ S).$$
\end{lemma}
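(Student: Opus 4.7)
The plan is to establish the contraction bound by a one-coordinate-at-a-time \emph{peeling} argument on the Rademacher variables. Set $A := \mathcal{F}\circ S = \{(f(x_1),\ldots,f(x_n)) : f\in\mathcal{F}\} \subset \mathbb{R}^n$ and define $\phi_i(t) := l(t,y_i)$, which by hypothesis is $C$-Lipschitz regardless of the fixed $y_i \in [-1,1]$. Then $l\circ\mathcal{F}\circ S = \{(\phi_1(a_1),\ldots,\phi_n(a_n)) : a\in A\}$, so it suffices to show that for any $C$-Lipschitz $\phi_1,\ldots,\phi_n:\mathbb{R}\to\mathbb{R}$,
$$\mathbb{E}_\sigma\!\left[\sup_{a\in A}\sum_{i=1}^n \sigma_i \phi_i(a_i)\right] \,\le\, C \cdot \mathbb{E}_\sigma\!\left[\sup_{a\in A}\sum_{i=1}^n \sigma_i a_i\right],$$
since dividing both sides by $n$ recovers the stated inequality.

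First I would condition on $\sigma_2,\ldots,\sigma_n$ and peel off $\sigma_1$. Writing $H(a) := \sum_{i\ge 2}\sigma_i\phi_i(a_i)$, the standard Rademacher symmetrization identity gives
$$\mathbb{E}_{\sigma_1}\!\left[\sup_{a\in A}\bigl(\sigma_1 \phi_1(a_1) + H(a)\bigr)\right] = \tfrac{1}{2}\sup_{a,a'\in A}\bigl(\phi_1(a_1) - \phi_1(a'_1) + H(a) + H(a')\bigr).$$
Applying the Lipschitz bound $\phi_1(a_1)-\phi_1(a'_1) \le C|a_1 - a'_1|$ and then exploiting that $H(a)+H(a')$ and $|a_1-a'_1|$ are both invariant under the swap $(a,a')\leftrightarrow(a',a)$, the sup allows us to orient pairs so that $a_1 \ge a'_1$, thereby replacing $|a_1-a'_1|$ by $(a_1 - a'_1)$. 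Re-splitting the resulting expression gives
$$\tfrac{1}{2}\sup_a (Ca_1 + H(a)) + \tfrac{1}{2}\sup_{a'}(-Ca'_1 + H(a')) = \mathbb{E}_{\sigma_1}\!\left[\sup_{a\in A}\bigl(C\sigma_1 a_1 + H(a)\bigr)\right],$$
so the Lipschitz wrapper $\phi_1$ has been replaced by a linear term, at the sole cost of the coefficient $C$ inside the sup.

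Next I would iterate the same peel for $i = 2,\ldots,n$: at step $i$, the role of ``$H$'' is played by the already-linearized terms $\sum_{j<i} C\sigma_j a_j$ plus the still-nonlinear tail $\sum_{j>i}\sigma_j\phi_j(a_j)$, and both the Lipschitz estimate on $\phi_i$ and the swap symmetry continue to hold. After $n$ peels we obtain
$$\mathbb{E}_\sigma\!\left[\sup_{a\in A}\sum_{i=1}^n \sigma_i\phi_i(a_i)\right] \le \mathbb{E}_\sigma\!\left[\sup_{a\in A}\sum_{i=1}^n C\sigma_i a_i\right] = C\cdot\mathbb{E}_\sigma\!\left[\sup_{a\in A}\sum_{i=1}^n \sigma_i a_i\right],$$
which upon dividing by $n$ is exactly $R(l\circ\mathcal{F}\circ S)\le C\cdot R(\mathcal{F}\circ S)$.

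The main obstacle is the symmetry step: justifying carefully that $|a_1 - a'_1|$ can be replaced by $(a_1 - a'_1)$ inside the sup requires combining the $(a,a')\leftrightarrow(a',a)$ invariance of $H(a)+H(a')$ with the freedom to rename the dummy pair. A secondary but crucial point is bookkeeping the constant $C$: it always lives \emph{inside} the sup as a coefficient of $\sigma_i a_i$ and therefore does \emph{not} compound to $C^n$ over the $n$ peels; only at the final step does a single $C$ factor out as an overall multiplier, yielding the claimed contraction.
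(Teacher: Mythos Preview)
Your proof is correct and is precisely the standard one-coordinate peeling argument for the contraction lemma; the paper itself does not supply a proof but simply cites Lemma~26.9 of \citet{shalev2014understanding}, whose proof is exactly the argument you have written. There is nothing to compare: you have reproduced the referenced textbook proof, including the key symmetry step and the observation that the factor $C$ stays inside the supremum and hence does not compound.
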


\end{document}